\documentclass{article}
\PassOptionsToPackage{dvipsnames,table}{xcolor}
\usepackage{aibl_arxiv} % loads geometry, times, xcolor, graphicx, tcolorbox,
\usepackage{url}
\usepackage{amsmath,amssymb}
\usepackage{booktabs}
\usepackage{multirow}
\usepackage{pifont}
\usepackage{titletoc}
\usepackage{float}
\usepackage{amsthm}
\usepackage{array}
\usepackage{enumitem}
\tcbuselibrary{most} % tcolorbox already loaded by aibl_arxiv
\newtheorem{proposition}{Proposition}
\theoremstyle{remark}
\newtheorem*{remark}{Remark}
\tcolorboxenvironment{proposition}{
  enhanced, breakable, colback=gray!4, colframe=black!35,
  boxrule=0.4pt, arc=1pt, left=6pt, right=6pt, top=3pt, bottom=3pt
}
\usepackage{algpseudocode}
\usepackage{algorithm}

\definecolor{ourshl}{HTML}{E2EFDA}
\newcommand{\venue}[1]{\,{\scriptsize\textcolor{gray}{#1}}}

\newcommand{\EAF}{E_{\mathrm{AF}}}
\newcommand{\Edir}{E_{\mathrm{direct}}}
\newcommand{\xsrc}{x^{\mathrm{src}}}

\begin{document}

\aiblmaketitle
  {ReState: Noise-Level-Aware State Correction for Training-Free Object Removal in Few-Step Diffusion Models}
  {Donghwan Lee$^{1,*}$, \ Sanghyun Jo$^{1, 2,*}$, \ Sihyun Kim$^{1}$ \ Ziseok Lee$^{1}$, \ Kyungsu Kim$^{1,\dagger}$}
  {$^{1}$Seoul National University \quad $^{2}$OGQ\\[0.15em]
   {\small $^{*}$equal contribution, $^{\dagger}$corresponding authors: \texttt{ kyskim@snu.ac.kr}}}
  {Training-free object removal commonly redirects self-attention away from the target region, but on few-step distilled diffusion models this often leaves object-shaped residuals. We identify a temporal mismatch: a structured prediction forms inside the mask at the first forward, and the short trajectory offers few later chances to revise it, while attention rerouting acts mainly in later steps. Across four distilled backbones, magnitude-matched single-forward interventions show that background-directed state correction affects the final result more than attention rerouting. We therefore propose ReState, which corrects the masked clean prediction toward background evidence retrieved from the input using a noise-level-aware weight that vanishes at the final forward. ReState retains object-excluding attention as a complementary constraint and adds no denoiser evaluations. Across two paired benchmarks and two few-step backbones, ReState achieves the best masked PSNR, SSIM, and LPIPS among training-free methods, improving PSNR by $3.17$~dB on average over the strongest baseline. On OBER-Test, it also outperforms the 50-step attention-forcing reference on all three metrics at $6.7\times$ lower latency. The same configuration transfers to two additional distilled backbones without retuning. Code will be released upon publication.}
  {\aiblresource{Project Page}{https://tdrlee.github.io/ReState}
  \aiblresource{Code}{https://github.com/tdrlee/ReState}
   \aiblresource{Lab}{https://aibl.snu.ac.kr}}
\suppressfloats[t]
% ==================================================================
% Figure 1: motivation -----------------------------------------------------
\begin{figure}[t]
\centering
\includegraphics[width=\linewidth]{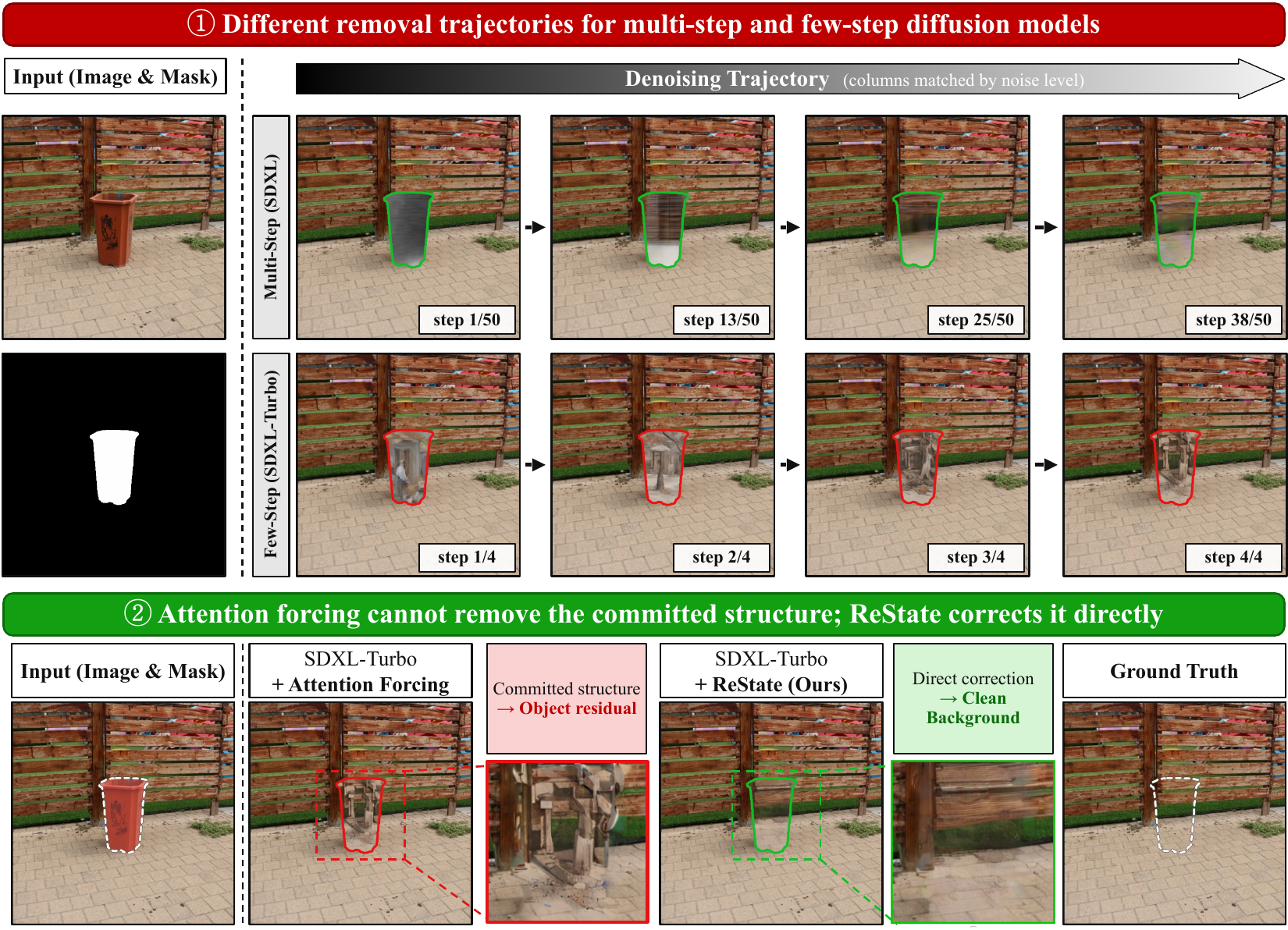}

\caption{\textbf{Motivation.} With attention forcing applied, the multi-step model (SDXL) starts diffuse and forms the background gradually along the trajectory (top), while the few-step model (SDXL-Turbo) already fixes a structured image in the mask at the first forward, and the short trajectory does not absorb it. This committed structure is what attention forcing then fails to remove, leaving a residual (bottom); ReState instead corrects the prediction directly and removes it.}
\label{fig:teaser}
\vspace{-13pt}
\end{figure}

\section{Introduction}
\label{sec}

Object removal aims to reconstruct the background occluded by a masked object. Training-free diffusion methods leverage pretrained generative priors without task-specific fine-tuning. Most existing training-free methods pursue removal by intervening in self-attention: they suppress attention to target-region keys so that the network draws information from the surrounding context~\citep{designedit2024,attentiveeraser2025,adaeraser2026,pandora2026}. We refer to this family of interventions as \emph{attention forcing} (Section~\ref{sec:af}). In their multi-step setting, these methods typically require tens of denoising forwards. Native few-step distilled models offer a substantially faster alternative, but attention forcing does not transfer cleanly to this regime: although it improves removal over no intervention, object contours and textures often remain inside the mask. The central question is therefore not only how to redirect attention, but why this intervention fails to change content that a short trajectory has already begun to commit.

To answer this question, we compare matched single-forward interventions. From the same latent at the same noise level, we either apply object-excluding attention for one forward or directly move the masked clean prediction toward retrieved background evidence by the same $\ell_2$ norm. Sampling then continues without further intervention. Across four distilled backbones, direct correction at the first forward produces a larger terminal removal effect than attention forcing. On the Turbo backbones, the one-step effect of attention forcing at the first forward is statistically not significantly different from zero, whereas the matched direct correction is significant. These results reveal a temporal mismatch: a few-step trajectory is most revisable at its earliest forwards, where a matched correction of the predicted state is already effective, whereas the effect of attention forcing concentrates at later, lower-noise forwards and arrives after much of the masked content is committed.

So, we propose ReState, a training-free method that directly corrects the masked clean prediction toward background evidence retrieved from unmasked source regions. The correction is strongest when noise remains and vanishes at the final forward, leaving the model free to synthesize final details. Object-excluding attention is applied throughout sampling as a complementary constraint; its additional benefit is concentrated near final synthesis. The retrieval features are shared with the first generation forward, so ReState preserves the native number of denoiser evaluations. Our contributions are:
\begin{itemize}
\setlength{\topsep}{-15pt}
\setlength{\itemsep}{-1pt}
\setlength{\parsep}{0pt}
\item \textbf{Diagnosis.} We use matched single-forward interventions to show that background-directed direct correction has a larger first-forward terminal effect than attention forcing across four distilled backbones. On the Turbo family, the first-forward effect of attention forcing is not significantly different from zero.
\item \textbf{Method.} We propose ReState, which corrects the masked clean prediction toward background evidence retrieved from the input image, with strength keyed to the remaining noise level, and keeps object-excluding attention as a route constraint. Among the compared training-free methods, ReState is the only one that writes an explicit background target into the predicted state, and it adds no denoiser evaluations beyond the native schedule.
\item \textbf{Results.} Across two paired benchmarks and two main few-step backbones, ReState achieves the best masked PSNR, SSIM, and LPIPS among training-free methods, improving masked PSNR by $3.17$~dB over the strongest baseline. On OBER-Test it surpasses every compared 50-step attention-forcing configuration in SSIM and LPIPS, runs $6.7\times$ faster than the 50-step reference, and a single frozen configuration transfers to SDXL-Lightning and Hyper-SDXL without per-backbone tuning.
\end{itemize}
% ==================================================================
\section{Related Work}
\label{sec:related}

\begin{table}[t]
\centering
\caption{\textbf{ReState is the only training-free removal method that writes an explicit background target into the predicted state at the native few-step cost.} Cells classify each method as published (Appendix~\ref{app:auxpass}); they state design properties, not measured performance, which Table~\ref{tab:main} reports. The comparison covers training-free removal methods; training-based methods are evaluated in Appendix~\ref{app:quant}.}
\label{tab:concept}
\footnotesize
\setlength{\tabcolsep}{3pt}
\renewcommand{\arraystretch}{1.15}
\newcommand{\yes}{\textcolor{green!60!black}{\checkmark}}
\newcommand{\no}{\textcolor{red}{\ding{55}}}
\resizebox{\textwidth}{!}{%
\begin{tabular}{>{\raggedright\arraybackslash}m{4.4cm} ccccc c}
\toprule
Property & \shortstack{DesignEdit\\\venue{AAAI'25}} & \shortstack{Attentive Eraser\\\venue{AAAI'25}} & \shortstack{AdaEraser\\\venue{ICML'26}} & \shortstack{PANDORA\\\venue{ICME'26}} & \shortstack{FuLLaMa\\\venue{WACV'26}} & \shortstack{\textbf{ReState}\\(Ours)} \\
\midrule
Writes an \textbf{explicit background target} into the predicted state & \no & \no & \no & \no & \no & \yes \\
Adds \textbf{no forwards or external models} beyond the backbone & \yes & \no & \no & \no & \no & \yes \\
Operates at the \textbf{native few-step} schedule & \no & \no & \no & \no & \no & \yes \\
\bottomrule
\end{tabular}%
}
\vspace{-8pt}
\end{table}

\noindent\textbf{Training-based and optimization-based removal.}
Object removal has long been addressed as inpainting: GAN-based inpainting developed large-mask completion \citep{zhao2021comodgan,suvorov2022resolution,li2022mat,zuo2023ganseg,sargsyan2023migan}, and diffusion-based inpainting extended it with text, shape, and exemplar guidance \citep{Xie2023smartbrush,yang2023paintbyexample,manukyan2025hdpainter}. Training-based object removal methods build on these priors and learn removal-specific behavior through dedicated data or adaptation. PowerPaint learns task prompts for removal \citep{powerpaint2024}, CLIPAway steers an inpainting latent away from the target identity \citep{clipaway2024}, and SmartEraser and EntityErasure use removal-specific supervision \citep{smarteraser2025,entityerasure2025}. EraseLoRA adapts a low-rank adapter at test time \citep{eraselora2026}, while EraDiff calibrates the diffusion trajectory for erasure \citep{eradiff2025}. OmniPaint disentangles removal from insertion with dedicated training \citep{yu2025omnipaint}, and effect-aware removal extends to video \citep{fu2026EffectErase}. A separate line trains fast removal models that generate results in one to four forwards \citep{flashclear2026,osor2026}. These approaches can achieve high background fidelity, but they require removal supervision, test-time optimization, or dedicated training. ReState instead operates on a frozen pretrained model without removal-specific training.

\noindent\textbf{Training-free attention-based removal.}
Training-free methods keep the generator frozen and modify self-attention so that image-token attention suppresses keys in the target region and draws information from surrounding context. DesignEdit \citep{designedit2024}, Attentive Eraser \citep{attentiveeraser2025}, AdaEraser \citep{adaeraser2026}, and PANDORA \citep{pandora2026} instantiate this idea through different suppression rules, schedules, or guidance mechanisms; FuLLaMa additionally searches sampling parameters with derivative-free optimization \citep{fullama2026}; attention routing also extends to dense-scene removal~\citep{dors2026} and, combined with local in-mask perturbations, to training-free geometric editing~\citep{freefine2025}; adjacent training-free lines preserve the background while editing, through zero-shot composition with a diffusion prior \citep{chen2024freecompose} or key--value caching of background tokens \citep{zhu2025kv}; none of these supplies an explicit background target for the masked prediction. 

We use the hard object-excluding attention of this family as a complementary constraint and do not claim it as a contribution. The distinction of ReState is that it does not rely on attention rerouting alone: it directly corrects the masked clean prediction when the trajectory is most sensitive to that correction.

\noindent\textbf{Direct state manipulation in diffusion sampling.}
Diffusion methods for inverse problems and inpainting often modify the sampling trajectory or clean prediction to enforce measurement consistency, for example through known-region replacement \citep{lugmayr2022repaint, avrahami2023blended}, null-space projection \citep{wang2022zeroshot}, diffusion posterior sampling \citep{chung2022diffusion}, or plug-and-play proximal correction of the clean prediction \citep{diffpir2023}. ReState likewise modifies a clean prediction, and its correction objective (Eq.~\ref{eq:prox}) is an instance of the quadratic proximal data-correction of this line, with three differences: the target is a pseudo-target $B$ retrieved from the unmasked source rather than a measured observation, no gradient of a measurement model is computed, and the correction runs on the native four-step schedule of a distilled backbone. Direct state manipulation itself is therefore not our novelty. Rather, matched intervention measurements identify direct correction toward retrieved background evidence as the effective early intervention for few-step removal and determine its noise-level-dependent schedule.

\noindent\textbf{Few-step distilled diffusion.}
Adversarial and consistency distillation enable image generation in one to four forwards \citep{sauer2023adversarial,lin2024sdxllightning,ren2024hypersd,luo2023latent}. Prior work has primarily studied generation quality and speed in this regime. Editing and inpainting are also emerging on few-step backbones, through text-based few-step editing \citep{turboedit2024} and one-step inversion for few-step inpainting \citep{inverfill2026}, and the choice of early timesteps is known to steer editing outcomes \citep{kim2025early}. We instead study whether an external removal intervention applied at a single forward survives to the terminal image. This terminal-effect analysis reveals the distinct temporal roles of attention rerouting and direct prediction correction, and motivates their combination in ReState.

% ==================================================================
\section{Method}
\label{sec:method}

Figure~\ref{fig:method} overviews ReState. The method retrieves background evidence from the first generation forward, corrects the masked clean prediction during sampling, and retains object-excluding attention as a complementary constraint throughout the native schedule.

\begin{figure}[t]
\centering
\includegraphics[width=\linewidth]{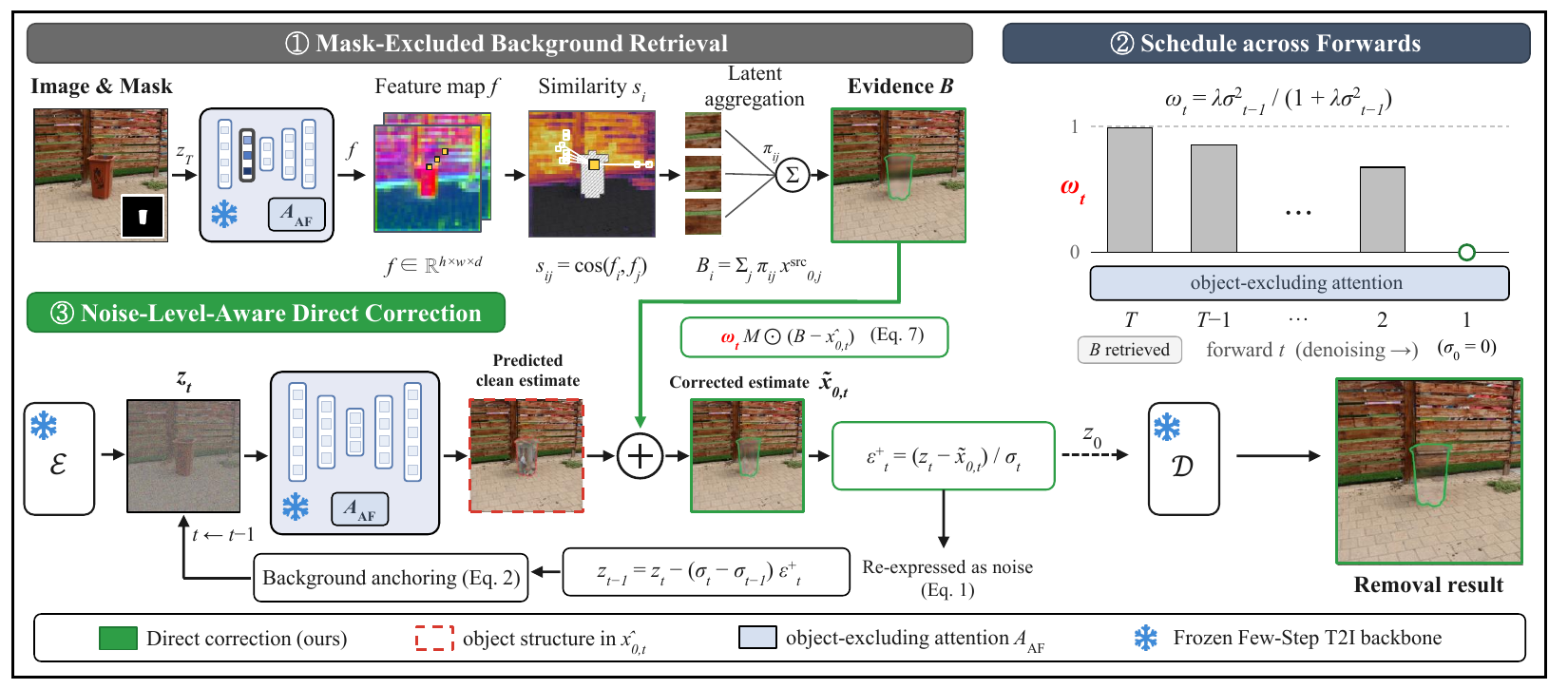}

\caption{\textbf{Overview of ReState.} Features from the first generation forward retrieve background evidence $B$ from unmasked source regions without an additional denoiser evaluation. At each step, ReState moves the masked clean prediction $\hat{x}_{0,t}$ toward $B$ with a noise-level-aware weight $\omega_t$. The weight vanishes at the final forward, while object-excluding attention remains active throughout sampling.}
\label{fig:method}
\vspace{-13pt}
\end{figure}

\subsection{Problem Setup}
\label{sec:setup}

Given a source image $\xsrc$, a binary removal mask $M$, a text condition $c$, and a frozen few-step model with native schedule $\sigma_T > \cdots > \sigma_0 = 0$, let $\Omega_m$ and $\Omega_b$ denote the masked and unmasked latent positions. We use $M$ also for the corresponding latent-grid mask. Sampling starts from the noised source latent $z_T=\mathcal{E}(\xsrc)+\sigma_T\epsilon$~\citep{rombach2022ldm}, with $\epsilon\sim\mathcal{N}(0,I)$. We use variance-exploding notation; the parameterization and scheduler conversions for all backbones are given in Appendix~\ref{app:parameterization}.

For the $\epsilon$-parameterized models used here, the denoiser induces the clean prediction
\begin{equation}
\hat{x}_{0,t}
=
z_t-\sigma_t\,\epsilon_\theta(z_t,\sigma_t,c;\mathcal{A}_t),
\label{eq:x0hat}
\end{equation}
where $\mathcal{A}_t$ denotes the attention used at step $t$. At fixed $z_t$, correcting $\hat{x}_{0,t}$ is therefore equivalent to correcting the noise prediction consumed by the native sampler. After each transition, we anchor the unmasked region to the source at the matching noise level:
\begin{equation}
z_{t-1}
\leftarrow
M\odot z_{t-1}
+
(1-M)\odot\left(\mathcal{E}(\xsrc)+\sigma_{t-1}\epsilon\right).
\label{eq:blend}
\end{equation}
Thus, all mechanisms below modify only the masked region.

\subsection{Existing Object-Excluding Attention}
\label{sec:af}

Existing training-free removal methods modify self-attention with a mask-dependent bias on target-region keys. We refer to this family of interventions as \emph{attention forcing}. Its hard and soft forms are
\begin{equation}
\Phi^{\mathrm{hard}}_{q,s}
=
\begin{cases}
-\infty, & s\in\Omega_m,\\
0, & \text{otherwise,}
\end{cases}
\qquad
\Phi^{\mathrm{soft}}_{q,s}
=
\log\eta_t(s),\quad \eta_t(s)\in(0,1],
\label{eq:phi}
\end{equation}
where $\eta_t(s)=1$ outside the mask and the hard form is the limit as $\eta_t(s)\to0$. ReState uses the hard form at every forward.

Attention forcing prevents self-attention from reading target-region keys, but it does not identify which background content a masked prediction should approach. ReState therefore uses it as a route constraint and supplies the missing target through direct correction. Appendix~\ref{app:af-details} specifies the attention layers, mask processing, and key-column implementation.

\subsection{Mask-Excluded Background Retrieval}
\label{sec:retrieval}

ReState retrieves an explicit background proposal from the source image. Retrieving correspondences from the unmasked source to fill masked content is a classical idea \citep{barnes2009patchmatch,yu2018contextual}, and diffusion features are known to carry dense correspondence \citep{tang2023dift}. During the first generation forward, we capture a feature $f_i$ at each latent position and reuse the same forward's denoiser output for sampling. For each masked position $i$, we compare its feature with features at unmasked positions, retain the top fraction $r$, and aggregate their clean source latents:
\begin{equation}
\mathcal{N}_r(i)
=
\operatorname{TopK}_{j:\,M_j=0}\!\left(s_{ij}\right),
\qquad
s_{ij}
=
\frac{f_i^\top f_j}{\lVert f_i\rVert_2\lVert f_j\rVert_2},
\qquad
K=\lceil rN_{\mathrm{bg}}\rceil,
\end{equation}
\begin{equation}
\pi_{ij}
=
\frac{\exp(s_{ij}/\tau)}
{\sum_{k\in\mathcal{N}_r(i)}\exp(s_{ik}/\tau)},
\qquad
B_i
=
\sum_{j\in\mathcal{N}_r(i)}
\pi_{ij}\,x^{\mathrm{src}}_{0,j},
\qquad i\in\Omega_m .
\label{eq:retrieve}
\end{equation}
We use $r=0.01$. Because $B$ is built from clean source latents outside the mask, it is a background proposal in the same space as $\hat{x}_{0,t}$ rather than a ground-truth reconstruction. The feature source, temperature, and implementation details are given in Appendix~\ref{app:retrieval}.

\subsection{Noise-Level-Aware Direct Correction}
\label{sec:correction}

The correction is defined as the solution of a proximal objective. At step $t$, we seek a clean prediction that remains close to the model prediction while moving the mask interior toward $B$, with the fidelity term weighted by the noise remaining after the upcoming transition:
\begin{equation}
\tilde{x}_{0,t}
=
\arg\min_x\;
\frac{1}{2\sigma_{t-1}^{2}}
\lVert x-\hat{x}_{0,t}\rVert_2^2
+
\frac{\lambda}{2}
\lVert M\odot(x-B)\rVert_2^2 .
\label{eq:prox}
\end{equation}

The form of Eq.~\ref{eq:prox} is the standard Gaussian combination of two estimates of the masked content: the model prediction enters with precision $\sigma_{t-1}^{-2}$, reflecting that the clean prediction becomes more reliable as the remaining noise decreases, and the retrieved evidence $B$ enters with a fixed precision $\lambda$. The quadratic form is what makes the combination viable under our constraints: the objective separates over latent positions and admits a closed-form minimizer, so the correction requires no gradient, no inner iterations, and no additional denoiser evaluations. We adopt this form as a design choice rather than a claim of optimality. Outside the mask, the minimizer is the original prediction; inside the mask, setting the coordinate-wise gradient to zero gives
\begin{equation}
\tilde{x}_{0,t}
=
\hat{x}_{0,t}
+
\omega_t M\odot(B-\hat{x}_{0,t}),
\qquad
\omega_t
=
\frac{\lambda\sigma_{t-1}^{2}}
{1+\lambda\sigma_{t-1}^{2}} .
\label{eq:update}
\end{equation}

The update has three immediate operational properties. First, it leaves the mask exterior unchanged. Second, it contracts the masked distance to the evidence by the factor $1-\omega_t$. Third, $0\leq\omega_t<1$, and the correction strength increases monotonically with the noise remaining after the transition. These properties characterize the update itself. Using $\sigma_{t-1}$ rather than $\sigma_t$ encodes the terminal boundary condition in the closed form. Early in the schedule, when the prediction can still be revised, the correction is strong. At the final transition, $\sigma_{t-1}=\sigma_0=0$ and hence $\omega_t=0$: the final forward is not directly corrected toward the averaged evidence $B$, but instead synthesizes final details under object-excluding attention alone. The weight profile itself is not the source of performance: a magnitude-matched constant schedule with the same terminal release is never significantly better and at most marginally worse (Appendix~\ref{app:abl-timing}). What the closed form contributes is that the terminal boundary and the intermediate weights follow from one objective with the single hyperparameter $\lambda$, rather than from a hand-designed temporal gate.
Appendix~\ref{app:derivation} gives the complete derivation, and Appendix~\ref{app:abl-strength} and Appendix~\ref{app:abl-timing} verify the strength profile and final-forward boundary. Algorithm~\ref{alg:restate} summarizes the complete procedure. ReState retrieves $B$ once from the first generation forward, applies the correction of Eq.~\ref{eq:update} at every step, maps the corrected clean prediction back to a noise prediction, and advances the native sampler. Sharing the first-forward features adds no denoiser evaluation.

\begin{algorithm}[t]
\caption{ReState sampling}
\label{alg:restate}
\begin{algorithmic}[1]
\Require source $\xsrc$, mask $M$, condition $c$; frozen denoiser $\epsilon_\theta$ with object-excluding attention $\mathcal{A}^{\mathrm{AF}}$; native schedule $\sigma_T>\cdots>\sigma_0=0$; autoencoder $(\mathcal{E},\mathcal{D})$; parameters $\lambda$, $r$
\Ensure removal result $\hat{I}$
\State $z_T\gets\mathcal{E}(\xsrc)+\sigma_T\epsilon$, \quad $\epsilon\sim\mathcal{N}(0,I)$
\State $\hat{\epsilon}_T,\{f_i\}\gets\epsilon_\theta(z_T,\sigma_T,c;\mathcal{A}^{\mathrm{AF}})$ \Comment{shared first forward}
\State $B\gets\textsc{Retrieve}(\{f_i\},M)$ \Comment{Eq.~\ref{eq:retrieve}; no additional denoiser evaluation}
\For{$t=T,\ldots,1$}
\State \textbf{if} $t<T$ \textbf{then} $\hat{\epsilon}_t\gets\epsilon_\theta(z_t,\sigma_t,c;\mathcal{A}^{\mathrm{AF}})$
\State $\hat{x}_{0,t}\gets z_t-\sigma_t\hat{\epsilon}_t$ \Comment{Eq.~\ref{eq:x0hat}}
\State $\omega_t\gets\lambda\sigma_{t-1}^{2}/(1+\lambda\sigma_{t-1}^{2})$
\State $\tilde{x}_{0,t}\gets\hat{x}_{0,t}+\omega_t\,M\odot(B-\hat{x}_{0,t})$ \Comment{Eq.~\ref{eq:update}}
\State $z_{t-1}\gets z_t-(\sigma_t-\sigma_{t-1})\,(z_t-\tilde{x}_{0,t})/\sigma_t$ \Comment{native Euler step on the corrected prediction}
\State $z_{t-1}\gets M\odot z_{t-1}+(1-M)\odot\left(\mathcal{E}(\xsrc)+\sigma_{t-1}\epsilon\right)$ \Comment{Eq.~\ref{eq:blend}}
\EndFor
\State \Return $\hat{I}\gets\mathcal{D}(z_0)$
\end{algorithmic}
\end{algorithm}
% ==================================================================
\section{Experiments}
\label{sec:experiments}

\begin{table}[t]
\centering
\caption{\textbf{Object removal on OBER-Test and RORD: on the same frozen backbone and the same nominal 4-step schedule, ReState opens a large gap on every masked metric.} Rows with $+$ apply the method on the backbone of their block; bold is best per column. Latency is the median per image and VRAM the peak allocation in GB, both on OBER-Test (Appendix~\ref{app:protocol}). Effective denoiser evaluations, which differ for batch-expanded baselines, are reported in Table~\ref{tab:runtime}. Training-based comparisons are in Appendix~\ref{app:quant}.}
\label{tab:main}
\small
\setlength{\tabcolsep}{4pt}
\begin{tabular}{l ccc ccc cc}
\toprule
& \multicolumn{3}{c}{OBER-Test} & \multicolumn{3}{c}{RORD} & & \\
\cmidrule(lr){2-4}\cmidrule(lr){5-7}
Method & PSNR$\uparrow$ & SSIM$\uparrow$ & LPIPS$\downarrow$ & PSNR$\uparrow$ & SSIM$\uparrow$ & LPIPS$\downarrow$ & Lat.\,(s)$\downarrow$ & VRAM$\downarrow$ \\
\midrule
SD-Turbo (4 steps) & 11.27 & 0.203 & 0.380 & 9.86 & 0.162 & 0.438 & 0.174 & 4.58 \\
$+$ Attentive Eraser\venue{AAAI'25} & 7.28 & 0.063 & 0.516 & 7.22 & 0.060 & 0.555 & 0.200 & 4.58 \\
$+$ DesignEdit\venue{AAAI'25} & 14.74 & 0.316 & 0.283 & 13.48 & 0.263 & 0.355 & 0.184 & 4.58 \\
$+$ AdaEraser\venue{ICML'26} & 12.27 & 0.237 & 0.347 & 10.48 & 0.177 & 0.420 & 0.843 & 5.08 \\
\rowcolor{ourshl}$+$ \textbf{ReState (Ours)} & \textbf{18.19} & \textbf{0.485} & \textbf{0.186} & \textbf{16.59} & \textbf{0.440} & \textbf{0.280} & 0.305 & 4.58 \\
\midrule
SDXL-Turbo (4 steps) & 9.86 & 0.133 & 0.435 & 7.55 & 0.074 & 0.516 & 0.247 & 7.1 \\
$+$ Attentive Eraser\venue{AAAI'25} & 7.09 & 0.046 & 0.514 & 7.35 & 0.065 & 0.538 & 0.338 & 7.1 \\
$+$ DesignEdit\venue{AAAI'25} & 15.50 & 0.348 & 0.269 & 14.34 & 0.326 & 0.334 & 0.281 & 7.1 \\
$+$ AdaEraser\venue{ICML'26} & 11.71 & 0.205 & 0.372 & 9.47 & 0.122 & 0.462 & 1.145 & 7.1 \\
\rowcolor{ourshl}$+$ \textbf{ReState (Ours)} & \textbf{18.75} & \textbf{0.498} & \textbf{0.194} & \textbf{17.21} & \textbf{0.451} & \textbf{0.273} & 0.373 & 7.1 \\
\bottomrule
\end{tabular}
\vspace{-13pt}
\end{table}

\noindent\textbf{Experimental setup.}
We evaluate on OBER-Test~\citep{zhao2026objectclear} (163 GT-paired images) and RORD~\citep{rord2022} (343 GT-paired images), both of which provide a source image, removal mask, and object-free background. Main removal results use SDXL-Turbo and SD-Turbo~\citep{sauer2023adversarial}; SDXL at 50 steps~\citep{podell2024sdxl} is the multi-step reference. We additionally evaluate the intervention analysis and full removal transfer on SDXL-Lightning~\citep{lin2024sdxllightning} and Hyper-SDXL~\citep{ren2024hypersd}. All training-free methods use the same forward-only removal pipeline: an empty prompt, guidance scale $0$, seed $0$, no inversion, and source-latent anchoring outside the mask at every step. We compare DesignEdit~\citep{designedit2024}, Attentive Eraser~\citep{attentiveeraser2025}, and AdaEraser~\citep{adaeraser2026} under the fully shared four-step pipeline; their official multi-step procedures are evaluated separately at their native step count in Appendix~\ref{app:multistep-baseline}, and training- and optimization-based comparisons are reported in Appendix~\ref{app:quant}. These ports carry each method's official hyperparameters onto the four-step schedule, so the comparison isolates the intervention under a shared pipeline and does not claim each baseline's best attainable four-step performance; a limited re-tuning study is reported in Appendix~\ref{app:retuning}. We report masked PSNR, SSIM, and LPIPS~\citep{zhang2018lpips} against the paired background, together with median end-to-end latency and effective denoiser evaluations. Metric definitions and preprocessing are in Appendix~\ref{app:stats}. All metric differences are paired at the image level and tested with a two-sided sign-flip permutation test using $10{,}000$ permutations; the four first-forward backbone comparisons additionally use Holm correction. Main results use a fixed seed $0$; seed robustness is analyzed at the end of this section.

\noindent\textbf{Main removal results.}
Table~\ref{tab:main} reports the main comparison. ReState achieves the highest masked PSNR and SSIM and the lowest LPIPS in all four backbone--benchmark settings. Relative to DesignEdit, the strongest training-free baseline, it improves masked PSNR by $3.17$~dB on average. Attentive Eraser falls below the no-intervention baseline at four steps because its SARG term, ported at its official scale, amplifies the noise prediction at every forward. This mechanism is calibrated for 50-step sampling: under the shared 50-step reference configuration, Attentive Eraser reaches $18.79$~dB on OBER-Test, versus $7.09$~dB for its four-step port (Appendix~\ref{app:multistep-baseline}). With only four forwards, the same amplification is too large for a trajectory in which each step commits a substantial fraction of the final image. Reducing only the SARG scale to $0.1\times$ recovers the four-step port to $15.34$~dB, which converges to the attention-masking level of DesignEdit and remains $3.0$ to $3.4$~dB below ReState in masked PSNR, significant on every masked metric (Appendix~\ref{app:retuning}).

The identical ReState configuration ($\lambda=8$, $r=0.01$, probe layer, and schedule keying) also transfers to SDXL-Lightning and Hyper-SDXL without per-backbone tuning. On both benchmarks, it improves over DesignEdit by $1.46$ to $2.39$~dB in masked PSNR, with gains on every masked metric; all stated gaps are significant under the protocol of Appendix~\ref{app:stats}. Full results appear in Appendix~\ref{app:additional-backbones}.

% Figure 3: qualitative -----------------------------------------------------
\begin{figure}[t]
\centering
\includegraphics[width=\linewidth]{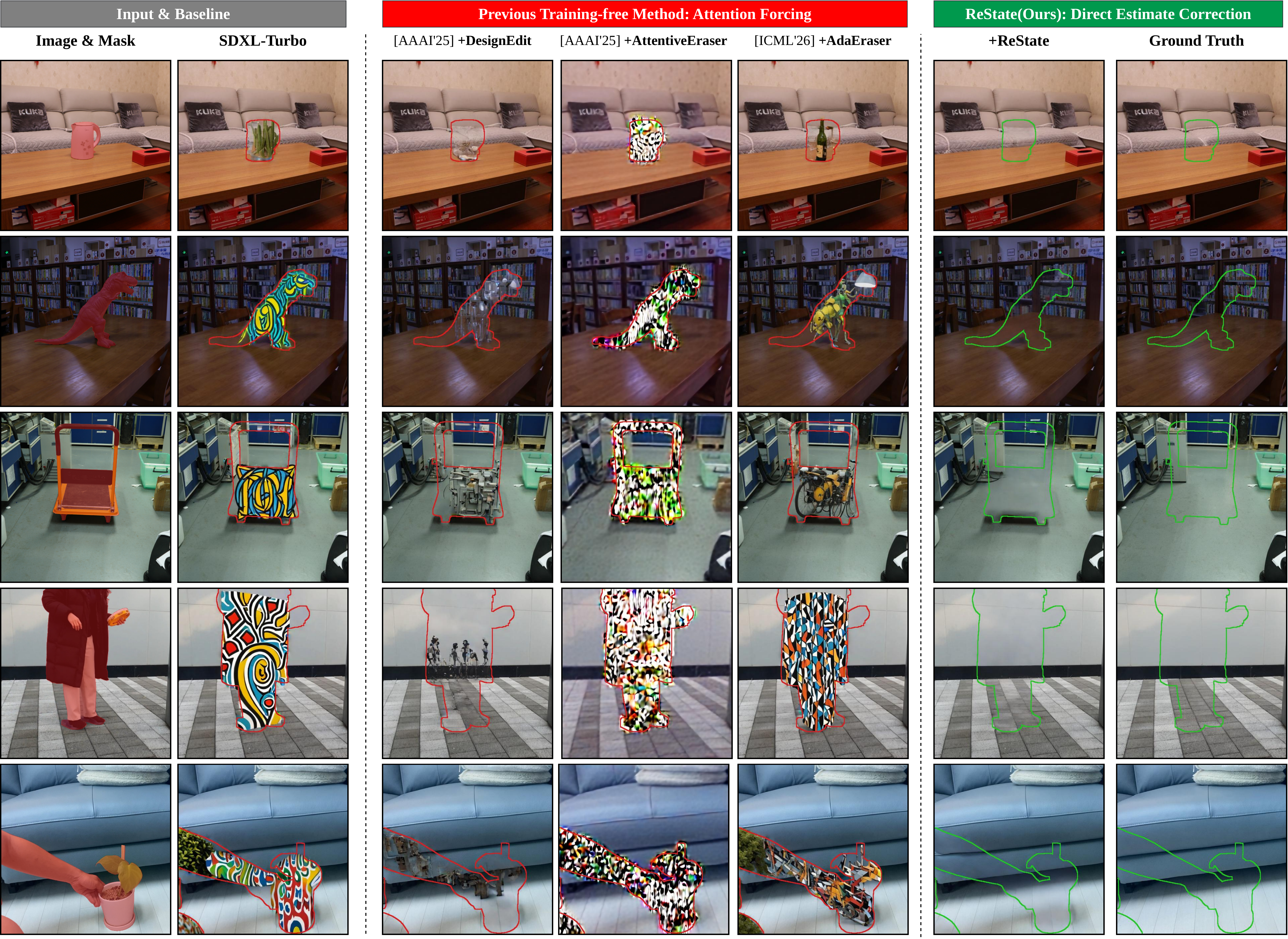}

\caption{\textbf{Qualitative results on GT-paired benchmarks.}
Previous training-free methods often leave object-shaped residuals in the mask, whereas ReState more closely reconstructs the paired background.}
\label{fig:qualitative}
\vspace{-10pt}
\end{figure}

\noindent\textbf{Intervention asymmetry and timing.}
At a chosen forward $t$, we start the base and intervened runs from the same state $z_t$. The attention-forcing arm applies object-excluding attention at that forward only and induces
$\Delta^{\mathrm{AF}}_t=M\odot(\hat{x}^{\mathrm{AF}}_{0,t}-\hat{x}^{\mathrm{base}}_{0,t})$.
The competing arm moves the same clean prediction toward retrieved background evidence by the matched norm. With $u_t=M\odot(B-\hat{x}^{\mathrm{base}}_{0,t})$ and a small $\varepsilon>0$,
\begin{equation}
\Delta^{\mathrm{direct}}_t
=
\frac{\lVert\Delta^{\mathrm{AF}}_t\rVert_2}
{\lVert u_t\rVert_2+\varepsilon}\,u_t .
\end{equation}
Exactly one perturbation is applied, after which sampling continues without further intervention. Letting $p(I,M)$ measure the source-to-GT directional residual of the output, the terminal removal effect is
\begin{equation}
E_k(t)
=
p\!\left(I^{\mathrm{base}}_0,M\right)
-
p\!\left(I^{k@t}_0,M\right),
\qquad
k\in\{\mathrm{AF},\mathrm{direct}\},
\label{eq:effect}
\end{equation}
where positive $E$ means that the one-time intervention reduces this residual. Appendix~\ref{app:intervention} defines $p$ and gives the complete protocol. At the first SDXL-Turbo forward, correction toward retrieved background yields $E=+0.0251$ (permutation $p=1.0\times10^{-4}$), whereas random and background-orthogonal directions yield $-0.0004$ (permutation $p=0.53$) and $-0.0006$ (permutation $p=0.36$), respectively. Thus, the comparison isolates the direction of state correction rather than perturbation magnitude. Table~\ref{tab:mechanism} shows that matched direct correction has a larger first-forward terminal effect than attention forcing on all four distilled backbones. The first-forward gaps remain significant after Holm correction (all corrected $p\leq4\times10^{-4}$). On the Turbo family, attention forcing is not significantly different from zero at the first forward ($p=0.27$ for SDXL-Turbo and $p=0.52$ for SD-Turbo), whereas direct correction is significant. On Lightning and Hyper-SDXL, attention forcing has a positive first-forward effect, but direct correction remains larger. For SDXL-Turbo, the terminal effect of a one-step attention intervention increases from $+0.002$ to $+0.033$ as noise decreases, whereas direct correction is effective from the first forward. A fixed-count control on the 50-step reference likewise favors late over high-noise attention forcing (Appendix~\ref{app:controls}). These results motivate a correction that is strong while the state remains revisable and decays before final synthesis.

\begin{table}[t]
\centering
\caption{\textbf{Direct correction acts from the first forward.}
Terminal effect $E$ of a single matched intervention across noise levels and distilled backbones. Direct correction ($\Edir$, bold) exceeds attention forcing ($\EAF$) at every noise level and backbone.}
\label{tab:mechanism}
\small
\setlength{\tabcolsep}{5pt}
\begin{tabular}{@{}l cc cc cc cc@{}}
\toprule
& \multicolumn{2}{c}{SDXL-Turbo} & \multicolumn{2}{c}{SD-Turbo} & \multicolumn{2}{c}{SDXL-Lightning} & \multicolumn{2}{c}{Hyper-SDXL} \\
\cmidrule(lr){2-3}\cmidrule(lr){4-5}\cmidrule(lr){6-7}\cmidrule(lr){8-9}
$\sigma$ & $\EAF$ & $\Edir$ & $\EAF$ & $\Edir$ & $\EAF$ & $\Edir$ & $\EAF$ & $\Edir$ \\
\midrule
14.6 & $+.002$ & $\mathbf{+.025}$ & $-.001$ & $\mathbf{+.029}$ & $+.034$ & $\mathbf{+.072}$ & $+.030$ & $\mathbf{+.052}$ \\
4.1  & $+.016$ & $\mathbf{+.039}$ & $+.018$ & $\mathbf{+.045}$ & $+.052$ & $\mathbf{+.059}$ & $+.049$ & $\mathbf{+.062}$ \\
1.6  & $+.028$ & $\mathbf{+.040}$ & $+.026$ & $\mathbf{+.036}$ & $+.029$ & $\mathbf{+.041}$ & $+.027$ & $\mathbf{+.042}$ \\
0.7  & $+.033$ & $\mathbf{+.049}$ & $+.021$ & $\mathbf{+.040}$ & $+.023$ & $\mathbf{+.046}$ & $+.020$ & $\mathbf{+.046}$ \\
\bottomrule
\end{tabular}
\end{table}

Table~\ref{tab:mechanism} reports $E$ across noise levels and backbones; effect magnitudes are interpreted within each backbone. At the first forward, the magnitude-matched direct correction has a larger terminal removal effect than attention forcing on every tested backbone, and the permutation tests of the appendix place every gap outside sample variation. The contrast is strongest for the Turbo family, where the first-forward effect of attention forcing is not significantly different from zero while the same-norm direct correction is clearly significant; on Lightning and Hyper both effects are positive and the gap remains. Across the schedule, the terminal effect of attention forcing rises monotonically as noise decreases ($+0.002$ to $+0.033$ on SDXL-Turbo), while the direct correction is effective from the very first forward. A fixed-count control on the 50-step reference agrees: at the same count, late placements outperform high-noise placements (Appendix~\ref{app:controls}). The effect of attention forcing thus arrives late, which motivates decaying the direct correction as the trajectory approaches final synthesis, exactly what the weight of Eq.~\ref{eq:update} does.

\begin{table}[t]
\centering
\caption{\textbf{Component and timing ablation.}
Direct correction supplies most of the gain; attention forcing contributes mainly at the last forward.}
\label{tab:ablation}
\small
\setlength{\tabcolsep}{5pt}
\newcommand{\yes}{\textcolor{green!60!black}{\checkmark}}
\newcommand{\no}{\textcolor{red}{\ding{55}}}
\begin{tabular}{ccc ccc}
\toprule
\shortstack{Direct\\correction} & \shortstack{AF\\(all forwards)} & \shortstack{AF\\(last only)} & PSNR$\uparrow$ & SSIM$\uparrow$ & LPIPS$\downarrow$ \\
\midrule
\no & \no & \no & 9.86 & 0.133 & 0.435 \\
\no & \no & \yes & 12.15 & 0.182 & 0.370 \\
\no & \yes & \no & 15.50 & 0.348 & 0.269 \\
\yes & \no & \no & 17.62 & 0.477 & 0.237 \\
\yes & \no & \yes & 18.65 & 0.495 & 0.196 \\
\rowcolor{ourshl}\yes & \yes & \no & \textbf{18.75} & \textbf{0.498} & \textbf{0.194} \\
\bottomrule
\end{tabular}
\vspace{-5pt}
\end{table}

\noindent\textbf{Component and timing ablation.}
Table~\ref{tab:ablation} separates the two components on SDXL-Turbo. Direct correction alone reaches $17.62$~dB, compared with $15.50$~dB for full-schedule attention forcing alone. Adding attention forcing only at the final forward raises the result to $18.65$~dB, and applying it throughout reaches $18.75$~dB. Conversely, last-forward attention forcing without direct correction reaches only $12.15$~dB, below full-schedule attention forcing. Attention forcing therefore contributes on top of a state that direct correction has already prepared; its additional benefit is concentrated late, although the full schedule retains a small significant gain (Appendix~\ref{app:abl-timing}). The gain is also not explained by the retrieved proposal alone: at the same $B$ and the same four forwards, the per-forward correction outperforms writing $B$ into the latent once (Appendix~\ref{app:abl-prefill}). The spatial correspondence of the retrieval also contributes: replacing the retrieved $B$ with a global background mean costs $0.59$~dB (Appendix~\ref{app:abl-evidence}).

\noindent\textbf{Efficiency and multi-step comparison.}
Table~\ref{tab:runtime} reports hook-measured effective denoiser evaluations and end-to-end latency. ReState remains at the native four evaluations because the retrieval probe shares the first generation forward; feature matching and the masked update are its only additional operations, adding $51\%$ latency over the no-intervention baseline on SDXL-Turbo ($0.373$ versus $0.247$~s). Against the 50-step attention-forcing reference (SDXL with DesignEdit), ReState improves all three masked metrics at $6.7\times$ lower latency. Appendix~\ref{app:multistep-baseline} extends this comparison to every compared attention-forcing baseline at its native 50 steps: four-step ReState exceeds all of them in SSIM and LPIPS, and its PSNR is within $0.04$~dB of the best.

\noindent\textbf{Robustness.}
Re-running every training-free row of Table~\ref{tab:main} with seeds $\{0,1,2\}$ leaves the ranking unchanged: ReState is first on all three masked metrics for every seed, backbone, and benchmark, each per-seed gap over DesignEdit is individually significant ($p = 1.0\times10^{-4}$ in all $36$ tests), and ReState varies least across seeds, with a masked-PSNR standard deviation of $0.02$ to $0.21$~dB versus $0.31$ to $0.81$~dB for DesignEdit (Appendix~\ref{app:seeds}). Second, granting each baseline a limited re-tuning on the four-step schedule does not close the gap: both cross-benchmark selections converge to the same configuration per method, and the selected results remain $3.0$ to $3.4$~dB below ReState with every masked metric significant (Appendix~\ref{app:retuning}).

\begin{table}[t]
\centering
\caption{\textbf{Cost and comparison against the multi-step reference.} Latency is median per image; forwards counts the effective model passes.}
\label{tab:runtime}
\small
\begin{tabular}{lccccc}
\toprule
Method & Forwards & Latency (s)$\downarrow$ & PSNR$\uparrow$ & SSIM$\uparrow$ & LPIPS$\downarrow$ \\
\midrule
SDXL-Turbo & 4 & 0.247 & 9.86 & 0.133 & 0.435 \\
$+$ Attentive Eraser\venue{AAAI'25} & 8 & 0.338 & 7.09 & 0.046 & 0.514 \\
$+$ DesignEdit\venue{AAAI'25} & 4 & 0.281 & 15.50 & 0.348 & 0.269 \\
$+$ AdaEraser\venue{ICML'26} & 8 & 1.145 & 11.71 & 0.205 & 0.372 \\
\rowcolor{ourshl}$+$ \textbf{ReState (Ours)} & 4 & 0.373 & \textbf{18.75} & \textbf{0.498} & \textbf{0.194} \\
\midrule
SDXL $+$ DesignEdit\venue{AAAI'25} & 50 & 2.50 & 17.86 & 0.441 & 0.209 \\
\rowcolor{ourshl} \textbf{SDXL $+$ ReState (Ours; App.~\ref{app:multistep-baseline})} & 50 & 2.84 & 18.83 & 0.485 & 0.181 \\
\bottomrule
\end{tabular}
\vspace{-13pt}
\end{table}

% ==================================================================
\section{Discussion}
\label{sec:discussion}

\textbf{Why ReState combines a route constraint and a state correction.}
The combination follows from a measured temporal mismatch: a few-step trajectory commits to its masked prediction early, while the effect of attention forcing arrives only late. At equal perturbation norm, a background-directed correction of the masked clean prediction is effective from the first forward, whereas the first-forward effect of attention forcing is not significantly different from zero on the Turbo family and rises from $+0.002$ to $+0.033$ only as noise decreases; random and background-orthogonal perturbations of the same norm have no significant terminal effect, so the gap reflects direction and timing rather than magnitude (Table~\ref{tab:mechanism}). Commitment is equally measured: a first-forward perturbation survives to the output with slope $1.38$ on the native four-step trajectory but only $0.30$ on the 50-step reference (Appendix~\ref{app:retention}, Appendix~\ref{app:multistep}). The direct correction therefore supplies the background direction, which attention forcing cannot specify, while generative freedom remains; attention forcing constrains late synthesis, with $1.03$ of its $1.13$~dB added benefit arriving at the final forward (Table~\ref{tab:ablation}, Appendix~\ref{app:abl-timing}): it completes the removal state that the direct correction has already established.

\textbf{The native few-step schedule is an operating point, not a degraded setting.}
Attention-forcing baselines that degrade under the four-step port operate correctly at their native step count (Attentive Eraser: $7.09 \to 18.79$~dB at 50 steps; Appendix~\ref{app:multistep-baseline}). Rescaling SARG recovers part of this gap within four steps ($15.34$~dB, Appendix~\ref{app:retuning}), so the four-step failure decomposes into a calibration component, removable by re-tuning, and a regime component that persists at the attention-masking level.
Conversely, ReState transferred unchanged to 50 steps yields $17.57$~dB, since the $\sigma_{t-1}$-keyed correction stays strong across the many low-noise forwards where fine texture is synthesized; a diagnostic window ($\sigma_{t-1}\geq1.61$) recovers $18.83$~dB, whereas on the native schedule the closed-form boundary $\omega\to0$ already leaves the final low-noise forward uncorrected (Appendix~\ref{app:multistep-baseline}).

\textbf{Scope and limitations.}
The asymmetry tracks the distilled denoiser and its short trajectory rather than the step count alone: original SDXL weights on the same four-step grid do not reproduce it, and densifying the Turbo schedule only shrinks the direct effect; we read these as diagnostic controls, not a causal factorization (Appendix~\ref{app:controls}). Each claim is scoped to where it was measured: the zero first-forward effect of attention forcing to the Turbo family, the direct-correction advantage to the four distilled backbones tested, and the removal claims to two paired benchmarks and two main backbones, with full removal further verified on SDXL-Lightning and Hyper-SDXL (Appendix~\ref{app:additional-backbones}). The clearest method limitation is the quality of the background evidence: an oracle background raises masked PSNR from $18.75$ to $22.32$~dB, a headroom of $+3.57$~dB (Table~\ref{tab:appendix-evidence}; protocol in Appendix~\ref{app:abl-evidence}), and part of this headroom is already recoverable by updating the correspondence at every forward (Appendix~\ref{app:abl-probe}), so improving background correspondence and adding uncertainty estimation appear more promising than further attention machinery. A second, smaller residual is textural: the fill remains under-textured at $0.51\times$ the high-band energy of the true background (Appendix~\ref{app:frequency}), and Appendix~\ref{app:multistep-baseline} indicates that part of this reflects the limited free-synthesis budget of the coarse schedule rather than the correction itself.

\begin{table}[H]
\centering
\caption{\textbf{Evidence quality, not the correction mechanism, is the remaining bottleneck.} Each row supplies a different evidence $B$ to the identical correction.}
\label{tab:appendix-evidence}
\setlength{\tabcolsep}{10pt}
\renewcommand{\arraystretch}{1.15}
\begin{tabular}{lccc}
\toprule
Evidence $B$ & PSNR$\uparrow$ & SSIM$\uparrow$ & LPIPS$\downarrow$ \\
\midrule
Global background mean & 18.16 & 0.481 & 0.214 \\
\rowcolor{ourshl}Retrieved (default) & 18.75 & 0.498 & 0.194 \\
Ground-truth background (oracle) & 22.32 & 0.588 & 0.133 \\
\bottomrule
\end{tabular}
\end{table}

% ==================================================================
\section{Conclusion}
\label{sec:conclusion}
Attention-based object removal on few-step distilled diffusion often leaves object-shaped residuals. Matched single-forward interventions show that background-directed clean-prediction correction has a larger terminal effect from the first forward, while attention forcing grows more effective at later noise levels. ReState therefore corrects the masked prediction toward retrieved background evidence with a noise-level-aware weight and retains object-excluding attention for late synthesis. Across two paired benchmarks and two few-step backbones, it achieves the best masked metrics among training-free methods without additional denoiser evaluations. In the settings we evaluate, few-step removal requires direct state correction while generative freedom remains, rather than attention rerouting alone.

\section*{Acknowledgments}

This work was partly supported by the KHIDI grant funded by the Korean government (MOHW) [No.RS-2025-02307233, No.RS-2026-25613012], the NRF or IITP grants funded by the Korean government (MSIT) [No.05-26-04-0094, No.RS-2026-25472075, No.RS-2025-02305581, No.RS-2025-25442338, and No.RS-2021-II211343], the ITIP grant funded by the Korean government (MOTIR) [No.RS-2026-25549946], the Research grant from SNU, and the Strategic Hub grant for International Research Collaboration of SNU. 
Kyungsu Kim is affiliated with the School of Transdisciplinary Innovations, Department of Biomedical Science, Interdisciplinary Program in AI (IPAI) \& Graduate School of AI, Medical Research Center, and AI Institute at SNU.

\clearpage

\bibliography{main}
\bibliographystyle{plainnat}

% ==================================================================
\clearpage
\appendix
\captionsetup[table]{skip=6pt}
\raggedbottom

% Sections: alphabetic
\renewcommand{\thesection}{\Alph{section}}

% Tables and figures: restart numbering per section with the section letter
\makeatletter
\@addtoreset{table}{section}
\@addtoreset{figure}{section}
\makeatother
\renewcommand{\thetable}{\thesection\arabic{table}}
\renewcommand{\thefigure}{\thesection\arabic{figure}}
\def\theHtable{\thesection.\arabic{table}}
\def\theHfigure{\thesection.\arabic{figure}}
\setcounter{table}{0}
\setcounter{figure}{0}

% Equations: restart numbering per section with the section letter
\makeatletter
\@addtoreset{equation}{section}
\makeatother
\renewcommand{\theequation}{\thesection\arabic{equation}}
\def\theHequation{\thesection.\arabic{equation}}
\setcounter{equation}{0}

\phantomsection
{\huge\centering\bfseries Appendix\par}
\addcontentsline{toc}{section}{Appendix}
\vspace{1.5em}
{\large\centering\bfseries Contents\par}
\vspace{1em}

\startcontents[appendix]
\begingroup
\titlecontents{section}[1.7em]
  {\addvspace{0.4em}\bfseries}
  {\contentslabel{1.7em}}{}
  {\titlerule*[0.7pc]{.}\contentspage}
\titlecontents{subsection}[4.1em]
  {\addvspace{0.1em}}
  {\contentslabel{2.4em}}{}
  {\titlerule*[0.7pc]{.}\contentspage}
\titlecontents{subsubsection}[7.2em]
  {\addvspace{0em}}
  {\contentslabel{3.1em}}{}
  {\titlerule*[0.7pc]{.}\contentspage}
\printcontents[appendix]{}{1}{\setcounter{tocdepth}{3}}
\endgroup
\clearpage

% ============================================================
\section{Method Details}
\label{app:method}

\subsection{Exact Diffusion Parameterization}
\label{app:parameterization}

We state the clean-prediction and sampler relations of Section~\ref{sec:setup} in a backbone-independent form. A clean latent is corrupted as
\begin{equation}
z_t = x_0 + \sigma_t\,\epsilon \ \ \text{(VE)},
\qquad
z_t = \sqrt{\bar\alpha_t}\,x_0 + \sqrt{1-\bar\alpha_t}\,\epsilon \ \ \text{(VP)},
\label{eq:app-corruption}
\end{equation}
and the VP form matches the VE form under $\sigma_t = \sqrt{(1-\bar\alpha_t)/\bar\alpha_t}$ after rescaling. The network output enters only through its parameterization, and the clean prediction is
\begin{equation}
\hat{x}_{0,t} =
\begin{cases}
z_t - \sigma_t\,\epsilon_\theta(z_t) & \text{($\epsilon$-prediction)},\\[2pt]
\sqrt{\bar\alpha_t}\,z_t - \sqrt{1-\bar\alpha_t}\,v_\theta(z_t) & \text{($v$-prediction, VP coordinates)},\\[2pt]
x_\theta(z_t) & \text{(sample-prediction)}.
\end{cases}
\label{eq:app-x0general}
\end{equation}
In every case the correction of Eq.~\ref{eq:update} is applied to $\hat{x}_{0,t}$ in this clean space, and the corrected prediction $\tilde{x}_{0,t}$ is mapped back to the native output type by inverting the same relation, e.g.
\begin{equation}
\epsilon^{+}_t = \frac{z_t - \tilde{x}_{0,t}}{\sigma_t}
\label{eq:app-invert}
\end{equation}
for an $\epsilon$-parameterization, so the scheduler transition to $z_{t-1}$ runs unmodified. The method is therefore defined at the level of the clean prediction and is agnostic to the output parameterization and to the deterministic or stochastic form of the sampler.

\textbf{Per-backbone instantiation.} All four distilled backbones, SDXL-Turbo, SD-Turbo, SDXL-Lightning, and Hyper-SDXL,\footnote{Checkpoints: \texttt{stabilityai/sdxl-turbo}, \texttt{stabilityai/sd-turbo}, \texttt{ByteDance/SDXL-Lightning} (4-step UNet), and \texttt{ByteDance/Hyper-SD} (4-step SDXL LoRA).} are $\epsilon$-parameterized and run the identical native four-step schedule of the Euler discrete scheduler with trailing timesteps $(999, 749, 499, 249)$, giving $\sigma = 14.617,\ 4.082,\ 1.613,\ 0.693$ with $\sigma_0 = 0$; these are the four noise levels of the intervention grid in Table~\ref{tab:mechanism}. SDXL-Turbo and SD-Turbo operate at $512^2$ and SDXL-Lightning and Hyper-SDXL at $1024^2$. The scheduler exposes VE-form noise levels directly, so no explicit VP-to-VE conversion is performed at sampling time. At the final transition $\sigma_0 = 0$ no special-casing is required: the correction weight vanishes by the structure of the objective (Appendix~\ref{app:derivation}), and the sampler returns the clean prediction itself.

\subsection{Object-Excluding Attention Details}
\label{app:af-details}

With projected queries, keys, and values $Q,K,V$ and key dimension $d$, an attention-forcing layer adds the bias $\Phi$ of Eq.~\ref{eq:phi} to the logits and reads
\begin{equation}
\tilde{A} = \mathrm{softmax}\!\left(QK^{\top}/\sqrt{d} + \Phi\right), \qquad h_q = \sum_s \tilde{a}_{q,s}\, v_s ,
\label{eq:af}
\end{equation}
where $\tilde{a}_{q,s}$ are the entries of $\tilde{A}$; the members of Section~\ref{sec:af} differ only in $\Phi$.

\textbf{Attention convention.} As implemented, the bias $\Phi$ acts on the key axis: the columns of object image keys, $s \in \Omega_m$ at the layer's token grid, receive $\phi_{\cdot,s} = \log \eta_t(s)$, which is $-\infty$ in the hard form, and background key columns receive zero. The same key-column bias is shared by all query rows, so background queries are likewise prevented from referencing object keys, and it is broadcast across all heads of a layer. Cross-attention layers are never modified. 

The intervention covers every self-attention layer operating on the image-token grid; at $512^2$ this is 70 layers of the SDXL UNet, 10 at the $32^2$ grid and 60 at the $16^2$ grid, and the $1024^2$ backbones are covered at their correspondingly scaled grids. The latent-resolution mask is downsampled to each token grid by max-pooling, so a cell counts as object if any of its pixels is masked. ReState applies the hard form at all four forwards; the soft form appears only in the baselines of Section~\ref{sec:af}. Algorithm~\ref{alg:af} summarizes the per-layer computation.

\begin{algorithm}[H]
\caption{Object-excluding self-attention at one layer}
\label{alg:af}
\begin{algorithmic}[1]
\Require projected $Q, K, V$ on the image-token grid; token-grid mask $m$; suppression $\eta_t$
\State $b_s \gets \log \eta_t(s)$ for keys $s$ with $m_s = 1$, else $b_s \gets 0$ \Comment{$-\infty$ in the hard form}
\State \Return $\mathrm{softmax}\!\left(QK^{\top}/\sqrt{d} + \mathbf{1}\,b^{\top}\right) V$ \Comment{key-column bias, shared by all query rows and heads}
\end{algorithmic}
\end{algorithm}

\subsection{Background Retrieval Details}
\label{app:retrieval}

\textbf{Feature source.} Features $f_i$ are taken from the output of the first up-block of the UNet during the probe forward. The probe is the first generation forward itself: its features are captured by a hook and its noise prediction is reused at the first sampling step, so retrieval adds no auxiliary pass and the pipeline stays at four denoiser evaluations. The native resolution of this map at $512^2$ is $32 \times 32$; it is bilinearly resampled to the $64 \times 64$ grid of the latent, so each latent position receives one feature vector for matching.

\textbf{Matching.} For each masked position $i \in \Omega_m$, cosine similarities between the $\ell_2$-normalized feature $f_i$ and all background features $\{f_j : j \in \Omega_b\}$ are computed. The top $r = 1\%$ of background positions are retained, $k = \lceil 0.01\,\lvert\Omega_b\rvert \rceil$, which is between 26 and 41 positions with a median of 38 on OBER-Test, and their weights are the softmax of the retained similarities at temperature $\tau = 0.1$.

\textbf{Evidence space.} The evidence is $B_i = \sum_{k} w_k\,\mathcal{E}(\xsrc)_{s_k}$, the weighted average of the clean source latents at the retrieved positions. $B$ therefore lives in the same clean latent space as $\hat{x}_{0,t}$, and Eq.~\ref{eq:update} mixes commensurable quantities.

\subsection{Derivation of the Closed-Form Correction}
\label{app:derivation}

\textbf{Setting.} Fix a step $t$ with $\sigma_{t-1} > 0$ and drop the step index. The objective of Eq.~\ref{eq:prox} is
\begin{equation}
J(x) \;=\; \frac{1}{2\sigma_{t-1}^2}\,\lVert x - \hat{x}_0 \rVert_2^2 \;+\; \frac{\lambda}{2}\,\lVert M \odot (x - B) \rVert_2^2 .
\end{equation}

\begin{proposition}[Closed-form correction]
\label{prop:correction}
For $\sigma_{t-1} > 0$, $J$ is strictly convex and has a unique minimizer $x^*$, given coordinate-wise by $x_i^* = \hat{x}_{0,i}$ for $i \notin \Omega_m$ and
\begin{equation}
x_i^* \;=\; \hat{x}_{0,i} + \omega_t \left(B_i - \hat{x}_{0,i}\right),
\qquad
\omega_t \;=\; \frac{\lambda \sigma_{t-1}^2}{1 + \lambda \sigma_{t-1}^2},
\label{eq:app-closedform}
\end{equation}
for $i \in \Omega_m$, which is Eq.~\ref{eq:update}. 
Moreover:
\begin{enumerate}[label=(\roman*), leftmargin=2.2em, topsep=3pt, itemsep=2pt]
\item \emph{Mask-exterior invariance:} $(1 - M) \odot x^* = (1 - M) \odot \hat{x}_0$.
\item \emph{Contraction:} $x^*_i - B_i = (1 - \omega_t)\,(\hat{x}_{0,i} - B_i)$ for $i \in \Omega_m$,
      with the fixed factor $1 - \omega_t = 1/(1 + \lambda\sigma_{t-1}^2)$.
\item \emph{Bounds and monotonicity:} $0 \leq \omega_t < 1$; $\omega_t$ is increasing in
      $\sigma_{t-1}$, equals $0$ at $\sigma_{t-1} = 0$, and approaches $1$ only as
      $\sigma_{t-1} \to \infty$.
\end{enumerate}
\end{proposition}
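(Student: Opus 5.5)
The plan is to exploit the fact that $J$ separates over latent coordinates. Because $M$ is binary, $\lVert M\odot(x-B)\rVert_2^2=\sum_{i\in\Omega_m}(x_i-B_i)^2$. We can therefore write
\begin{equation*}
J(x)=\sum_{i\notin\Omega_m}\frac{(x_i-\hat{x}_{0,i})^2}{2\sigma_{t-1}^2}+\sum_{i\in\Omega_m}\left[\frac{(x_i-\hat{x}_{0,i})^2}{2\sigma_{t-1}^2}+\frac{\lambda}{2}(x_i-B_i)^2\right],
\end{equation*}
so minimizing $J$ reduces to minimizing each one-dimensional summand independently. I assume $\lambda\ge 0$; the paper uses $\lambda=8$.

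\textbf{Strict convexity and uniqueness.} The Hessian of $J$ is diagonal. Its entries are $\sigma_{t-1}^{-2}$ off the mask and $\sigma_{t-1}^{-2}+\lambda$ on the mask. For $\sigma_{t-1}>0$ all entries are positive, so $J$ is a strictly convex quadratic that is coercive. It therefore has exactly one minimizer, characterized by $\nabla J=0$.

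\textbf{Solving coordinatewise.} For $i\notin\Omega_m$, the stationarity condition gives $x_i^*=\hat{x}_{0,i}$ directly; this is (i). For $i\in\Omega_m$, setting the derivative to zero gives
\begin{equation*}
\sigma_{t-1}^{-2}(x_i-\hat{x}_{0,i})+\lambda(x_i-B_i)=0 .
\end{equation*}
Multiplying by $\sigma_{t-1}^2$ and solving yields
\begin{equation*}
x_i^*=\frac{\hat{x}_{0,i}+\lambda\sigma_{t-1}^2B_i}{1+\lambda\sigma_{t-1}^2}.
\end{equation*}
This is exactly $\hat{x}_{0,i}+\omega_t(B_i-\hat{x}_{0,i})$ with the stated $\omega_t$.

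\textbf{Contraction (ii).} Subtracting $B_i$ from the closed form gives $x_i^*-B_i=(1-\omega_t)(\hat{x}_{0,i}-B_i)$. Moreover, $1-\omega_t=1/(1+\lambda\sigma_{t-1}^2)$ by direct simplification.

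\textbf{Bounds and monotonicity (iii).} Write $\omega=g(s)=\lambda s^2/(1+\lambda s^2)$ with $s=\sigma_{t-1}\ge 0$. Because the numerator is nonnegative and strictly smaller than the denominator, $0\le\omega<1$. The derivative is $g'(s)=2\lambda s/(1+\lambda s^2)^2\ge 0$, which is strictly positive for $s>0$ when $\lambda>0$. We also have $g(0)=0$ and $g(s)\to1$ as $s\to\infty$, and $g(s)<1$ at every finite $s$.

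\textbf{Main obstacle: the boundary $\sigma_{t-1}=0$.} The computations above are routine. The only delicate point is this boundary, where $J$ itself is undefined because of the $1/\sigma_{t-1}^2$ factor. Statement (iii) refers to $g(0)$ as a property of the formula, not of the minimizer. I would say explicitly that at this boundary $\omega_t$ is defined by continuous extension of the closed form. This extension is consistent with the limiting minimizer: as the fidelity precision $\sigma_{t-1}^{-2}$ tends to infinity, the minimizer collapses to $\hat{x}_0$, which matches the treatment of the final step in Appendix~\ref{app:parameterization}.
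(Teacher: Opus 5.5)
Your proposal is correct and follows the paper's proof essentially step for step: a diagonal Hessian with entries $\sigma_{t-1}^{-2}+\lambda M_i>0$ gives strict convexity, coordinatewise stationarity gives $x_i^*=(\hat{x}_{0,i}+\lambda\sigma_{t-1}^2B_i)/(1+\lambda\sigma_{t-1}^2)$, and subtracting $B_i$ gives (ii). The differences are minor: the paper reads (iii) directly off $\omega_t=1-1/(1+\lambda\sigma_{t-1}^2)$ rather than differentiating, and it treats the $\sigma_{t-1}=0$ boundary in a separate Remark after the proof; your explicit caveats, that $\lambda>0$ is needed for strict monotonicity and for $\omega_t\to1$, are left implicit there.
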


\begin{proof}
$J$ is a sum of two convex quadratics whose Hessian is diagonal with entries $1/\sigma_{t-1}^2 + \lambda M_i > 0$, so $J$ is strictly convex and the unique minimizer is characterized by $\nabla J(x^*) = 0$. The objective separates over coordinates. For $i \notin \Omega_m$ the second term vanishes and $x_i^* = \hat{x}_{0,i}$, which gives (i). For $i \in \Omega_m$, stationarity reads
\begin{equation}
\frac{x_i^* - \hat{x}_{0,i}}{\sigma_{t-1}^2} + \lambda\left(x_i^* - B_i\right) = 0
\quad\Longrightarrow\quad
x_i^* = \frac{\hat{x}_{0,i} + \lambda \sigma_{t-1}^2 B_i}{1 + \lambda \sigma_{t-1}^2},
\end{equation}
which rearranges to Eq.~\ref{eq:app-closedform}. Subtracting $B_i$ from both sides gives (ii), and (iii) follows from $\omega_t = 1 - 1/(1+\lambda\sigma_{t-1}^2)$.
\end{proof}

\begin{remark}[Final transition]
At the last step $\sigma_{t-1} = \sigma_0 = 0$ the fidelity weight $1/\sigma_{t-1}^2$ diverges and the minimizer degenerates to $x^* = \hat{x}_0$; equivalently $\omega_t \to 0$ in the closed form. The final forward therefore receives no correction by the structure of the objective rather than by an added rule, and synthesis at that step is constrained only by the object-excluding attention of Eq.~\ref{eq:af}. 

With the schedule of SDXL-Turbo and $\lambda = 8$ the realized weights are $\omega = 0.993,\ 0.954,\ 0.794,\ 0$ over the four forwards. Sensitivity of removal quality to $\lambda$, together with the realized $\omega$ profile at each value, is reported with the method ablations in Table~\ref{tab:appendix-lsweep}. The necessity of the boundary itself is verified in Table~\ref{tab:appendix-timing}: keying $\omega$ to $\sigma_t$ leaves the final forward a correction and collapses perceptual quality.
\end{remark}

% ============================================================
\section{Experimental Details}
\label{app:protocol}

\textbf{Datasets.} OBER-Test is the GT-paired removal benchmark released with ObjectClear~\citep{zhao2026objectclear}: each sample pairs a source image containing the target object with a ground-truth background image in which the object is absent, together with the removal mask. We use the released set as is, without filtering or re-annotation; the 163 images constitute the full benchmark. 

RORD~\citep{rord2022} is released as $1920 \times 1080$ video sequences: each sequence contains frames with the removable objects present, a single object-free ground-truth frame of the same scene, and per-frame polygon annotations of those objects. From each of the 343 sequences we select one frame containing a single removal target, filtering candidates with the released annotations and confirming each choice by manual review, and pair it with the sequence's ground-truth frame; the removal mask is the binary union of the selected frame's object polygons.

Each triplet is then converted to $512^2$ by a mask-centered square crop rather than a full-frame downsample: the crop side is the smallest square that contains the mask with a $10\%$ margin per side, floored at $512$ pixels, the identical window is applied to the image, the mask, and the ground truth, and the crop is area-resized to $512^2$. Small objects are therefore evaluated at native resolution, and large objects incur only the minimal downsampling.

\textbf{Baselines.} All three training-free baselines run on the identical forward-only removal pipeline as ReState, so the comparison isolates the intervention: the prompt is the empty string, few-step sampling is guidance-free (scale $0.0$), the seed is fixed to $0$, and no inversion is used. The official DesignEdit aligns the source through DDIM inversion~\citep{song2021ddim}, while AdaEraser constructs its reference from the noised source latent at the matching noise level; our few-step port re-injects the clean source latent outside the mask at every step, which anchors the background in latent space while only the masked region is synthesized. 

DesignEdit is instantiated as the hard key-masking of Section~\ref{sec:af} over all self-attention layers and forwards (Appendix~\ref{app:af-details}). Attentive Eraser retains its official hyperparameters on the few-step backbone; SARG replaces classifier-free guidance~\citep{ho2021classifier}, so its guidance is disabled by construction. AdaEraser runs its adaptive suppression with a reference forward alongside each target forward; the official implementation concatenates the two into one batched call, whereas our port runs them sequentially, which is mathematically identical since batch elements do not interact in the denoiser, and is counted as two forwards in Appendix~\ref{app:auxpass}. Training-based comparisons use released checkpoints with their official default inference settings; the exact checkpoints, steps, and guidance scales are listed with the full tables in Appendix~\ref{app:quant}.

\textbf{Training data of the comparisons.} Table~\ref{tab:appendix-training-data} lists the training or adaptation data of every training-based removal method compared in Appendix~\ref{app:quant}; the two base inpainting models (SDXL-Inpainting and FLUX.1-Fill) are omitted~\citep{fluxfill2024}, as they carry web-scale pretraining rather than removal supervision. Overlap with a benchmark is judged at the dataset-family level regardless of split, from the training data stated in each paper or from a release that precedes the benchmark: OBER (2025) postdates every method except OSOR, and RORD (2022) precedes every method, so the RORD column rests on the stated training data alone.
\begin{table}[H]
\centering
\newcommand{\yes}{\textcolor{green!60!black}{\checkmark}}
\newcommand{\no}{\textcolor{red}{\ding{55}}}
\caption{\textbf{Training data of the training-based removal methods and its overlap with the evaluation benchmarks.} \protect\yes{}: stated to include the benchmark family; \protect\no{}: no stated overlap.}
\label{tab:appendix-training-data}
\small
\setlength{\tabcolsep}{5pt}
\renewcommand{\arraystretch}{1.15}
\begin{tabular}{llcc}
\toprule
Method & Training or adaptation data & OBER & RORD \\
\midrule
PowerPaint\venue{ECCV'24} & OpenImages v6 (seg.\ subset) + LAION-Aesthetics v2 5+ & \no & \no \\
CLIPAway\venue{NeurIPS'24} & frozen base; embedding projection trained on COCO & \no & \no \\
EntityErasure\venue{CVPR'25} & Open Images V7, about 1.75M (pseudo-labeled) & \no & \no \\
SmartEraser\venue{CVPR'25} & Syn4Removal, 1M (Open Images v7, SAM, COCONut) & \no & \no \\
EraseLoRA\venue{ECCV'26} & none (dataset-free test-time adaptation) & \no & \no \\
OSOR (SDXL-Inp.)\venue{ECCV'26} & CORNE, 280K pairs (curated from NHR-Edit) & \no & \no \\
OSOR (FLUX.1-Fill)\venue{ECCV'26} & CORNE + refinement on the captured subset of OBER & \yes & \no \\
\bottomrule
\end{tabular}
\end{table}

\textbf{Runtime protocol.} Latencies in Table~\ref{tab:runtime} are measured on a single NVIDIA A100 80GB in fp16 at batch size $1$ with seed $0$. Each entry is the median per-image end-to-end wall-clock over OBER-Test with the first three images excluded as warmup: the timed call spans the full generation, including the VAE encode, the text-encoder pass, all denoising forwards, and the VAE decode, and excludes only one-time model loading. Forward counts are measured with a pre-forward hook on the denoiser backbone: the nominal count is the number of backbone calls, and the effective count sums the batch sizes of those calls, so batch duplication inside one call is counted (Appendix~\ref{app:auxpass}). All experiments run in fp16 with PyTorch 2.7.1, diffusers 0.37.0.dev0, and CUDA 12.8; LPIPS uses the AlexNet backbone of \citet{zhang2018lpips} at its released weights, and every run fixes seed 0.

\subsection{Metrics and Statistics}
\label{app:stats}

\textbf{Metrics.} All metrics are computed at each backbone's operating resolution (Appendix~\ref{app:parameterization}): the result $R$, the ground truth $G$, and the mask $M$ are resized to that resolution, with area interpolation for images and nearest-neighbor for the mask, which is then binarized. A 4-pixel border is excluded from all PSNR and SSIM computations. Masked PSNR averages the squared error over masked pixels and the three RGB channels on the 8-bit scale,
\begin{equation}
\mathrm{MSE}_M = \frac{1}{3\,\lvert M \rvert} \sum_{i:\,M_i=1} \sum_{c=1}^{3} \left( R_{i,c} - G_{i,c} \right)^2,
\qquad
\mathrm{PSNR}_M = 10 \log_{10}\!\frac{255^2}{\mathrm{MSE}_M},
\label{eq:app-psnr}
\end{equation}
where $\lvert M \rvert$ is the number of masked pixels. Masked SSIM~\citep{wang2004ssim} computes the standard SSIM map $S_c$ of each channel on the full image, using an $11 \times 11$ Gaussian window with $\sigma = 1.5$ and stabilizers $C_1 = (0.01 \cdot 255)^2$, $C_2 = (0.03 \cdot 255)^2$, and averages the map over the masked positions,
\begin{equation}
\mathrm{SSIM}_M = \frac{1}{3} \sum_{c=1}^{3} \frac{1}{\lvert M' \rvert} \sum_{i:\,M'_i=1} S_c(i),
\label{eq:app-ssim}
\end{equation}
where $M'$ restricts the mask to the valid region of the windowed map. LPIPS admits no pixel-wise restriction, so masked LPIPS is computed on a composite that differs from the ground truth only inside the mask,
\begin{equation}
X = M \odot R + (1 - M) \odot G,
\qquad
\mathrm{LPIPS}_M = \mathrm{LPIPS}\big( X\vert_{\beta},\, G\vert_{\beta} \big),
\label{eq:app-lpips}
\end{equation}
where $\beta$ is the bounding box of $M$ padded by 32 pixels with a minimum side of 64, and LPIPS uses the AlexNet backbone~\citep{zhang2018lpips}. All metrics are computed per image and averaged over the benchmark. Masked PSNR reads only masked pixels and the LPIPS composite equals the ground truth outside the mask; masked SSIM averages the SSIM map over masked positions, but each map value is computed on an $11 \times 11$ window that can include pixels outside the mask near its boundary. Discrepancies outside the mask, such as lighting drift or passers-by in RORD's separately captured ground-truth frames, therefore do not enter PSNR or LPIPS, and affect SSIM only through these boundary windows.

\textbf{Statistics.} All effects and metric differences are paired at the image level. Significance statements in the main text and this appendix use a paired sign-flip permutation test over images with $10{,}000$ permutations; where a family of related tests is reported (the four backbone gaps of Table~\ref{tab:appendix-firstforward}), Holm correction is applied and all conclusions survive it. Tests backing main-text claims are confirmatory; the sweep and ablation tests of this appendix are exploratory and reported without multiplicity correction. Unless stated otherwise, every run uses seed $0$; seed robustness is measured in Appendix~\ref{app:seeds}.

\subsection{Auxiliary-Pass Accounting of Baselines}
\label{app:auxpass}

\textbf{Effective forwards.} We count the number of network forward passes each method performs per removal, measured with a forward hook on the denoiser rather than taken from the nominal step count. Table~\ref{tab:appendix-auxpass} reports the hook-measured counts for the methods run in our pipeline on the native four-step schedule. 
Attentive Eraser duplicates the batch to evaluate the suppressed and unsuppressed branches of its redirection guidance inside one call, doubling the effective computation at every step; AdaEraser runs a separate reference forward alongside each target forward; DesignEdit modifies attention inside the existing forwards and adds none. 

ReState reuses the first-forward features for retrieval, so it likewise stays at the native count. For the two methods of Table~\ref{tab:concept} that are not run in our pipeline, the auxiliary-pass classification follows the published procedures: PANDORA combines a stored-latent pass with its latent guidance, and FuLLaMa refines an initial removal through repeated diffusion passes, scoring each iterate with LVLM and CLIP evaluations during its runtime parameter search.

\begin{table}[H]
\centering
\caption{\textbf{ReState adds no auxiliary pass and stays at the native forward count.} Hook-measured on the denoiser under the native four-step schedule: nominal counts backbone calls and effective sums their batch sizes (Appendix~\ref{app:protocol}), so the two doubling mechanisms differ; Attentive Eraser duplicates the batch, AdaEraser calls twice. ReState's retrieval probe is shared with the first generation forward and adds none.}
\label{tab:appendix-auxpass}
\vspace{4pt}
\setlength{\tabcolsep}{10pt}
\renewcommand{\arraystretch}{1.15}
\begin{tabular}{l ccl}
\toprule
Method & Nominal & Effective & Auxiliary pass \\
\midrule
SDXL-Turbo & 4 & 4 & none \\
$+$ Attentive Eraser\venue{AAAI'25} & 4 & 8 & SARG branch pair \\
$+$ DesignEdit\venue{AAAI'25} & 4 & 4 & none \\
$+$ AdaEraser\venue{ICML'26} & 8 & 8 & reference forward \\
\rowcolor{ourshl}$+$ \textbf{ReState (Ours)} & 4 & 4 & none \\
\bottomrule
\end{tabular}
\end{table}

% ============================================================
\section{Intervention Analysis}
\label{app:analysis}

\subsection{Complete First-Forward Analysis}
\label{app:intervention}

\textbf{Protocol.} For a chosen forward $t$, the base and intervened runs start from the identical state $z_t$ under the identical seed. The attention-forcing arm replaces $\mathcal{A}_t$ by $\mathcal{A}^{\mathrm{AF}}$ at that single forward; the direct arm perturbs the clean prediction by $\Delta^{\mathrm{direct}}_t$, rescaled to $\lVert \Delta^{\mathrm{AF}}_t \rVert_2$ up to numerical tolerance, as in the main text, and the perturbed prediction is re-expressed as a noise prediction through Eq.~\ref{eq:x0hat} before the sampler update. All subsequent forwards run without any intervention. The source-to-GT directional residual $p(I, M)$ involves no learned scorer; the only model is the VAE encoder. With $z = \mathcal{E}(I)$, $z_{\mathrm{src}} = \mathcal{E}(\xsrc)$, and $z_{\mathrm{gt}}$ the encoding of the object-free ground truth, it is the scalar projection of the output onto the source-to-ground-truth direction over the latent positions of the mask,
\begin{equation}
p(I, M) \;=\; \frac{\big\langle z - z_{\mathrm{gt}},\; z_{\mathrm{src}} - z_{\mathrm{gt}} \big\rangle_{M}}{\lVert z_{\mathrm{src}} - z_{\mathrm{gt}} \rVert_{M}^{2}},
\label{eq:app-pscore}
\end{equation}
so $p = 1$ when the output matches the source and $p = 0$ when the output matches the object-free ground truth. Effects are averaged over $n=163$ images. All p-values in this appendix are from a paired sign-flip permutation test over images with $10{,}000$ permutations; $p = 1.0\times10^{-4}$ is the smallest value expressible at this count.

\textbf{Direction controls.} The matched norm alone does not produce removal. Keeping the perturbation norm fixed and changing only its direction gives, at the first forward on SDXL-Turbo: toward retrieved background $+0.0251$ ($p = 1.0\times10^{-4}$); toward the ground-truth background (oracle direction) $+0.0487$ ($p = 1.0\times10^{-4}$); random direction $-0.0004$ ($p = 0.53$); direction orthogonal to the background direction $-0.0006$ ($p = 0.36$). Equal-norm perturbations therefore reduce the source-to-GT residual only when they point toward background content, which validates the norm-matched comparison of the main text. The terminal effect also tracks realized quality at the image level: across the first-forward runs, the per-image gap $\Delta E$ between the direct and attention arms correlates with the corresponding masked PSNR and LPIPS changes of the same outputs (Spearman $\rho = 0.46$ and $0.38$ over $n{=}163$ images, both $p < 10^{-6}$), and among the 133 of 163 images with $\Delta E > 0$, $94.7\%$ also improve in masked PSNR and $93.2\%$ in LPIPS.

\textbf{Full first-forward results.} Table~\ref{tab:appendix-firstforward} reports the means of Table~\ref{tab:mechanism} with their tests: the permutation p-value that the effect of attention forcing is zero, and that the gap $d$ is zero. The gaps of all four distilled backbones remain significant after Holm correction across the four tests (all corrected p-values $\le 4\times10^{-4}$), while the first-forward effect of attention forcing on the Turbo family is not significantly different from zero. With the original SDXL weights on the same grid both interventions act symmetrically and the gap vanishes. Figure~\ref{fig:appendix-scatter} shows the same comparison at the image level: on the four distilled backbones 71 to 82 percent of images lie above the diagonal, while the weight control sits at 49 percent, not significantly different from chance.

\begin{figure}[H]
\centering
\includegraphics[width=\textwidth]{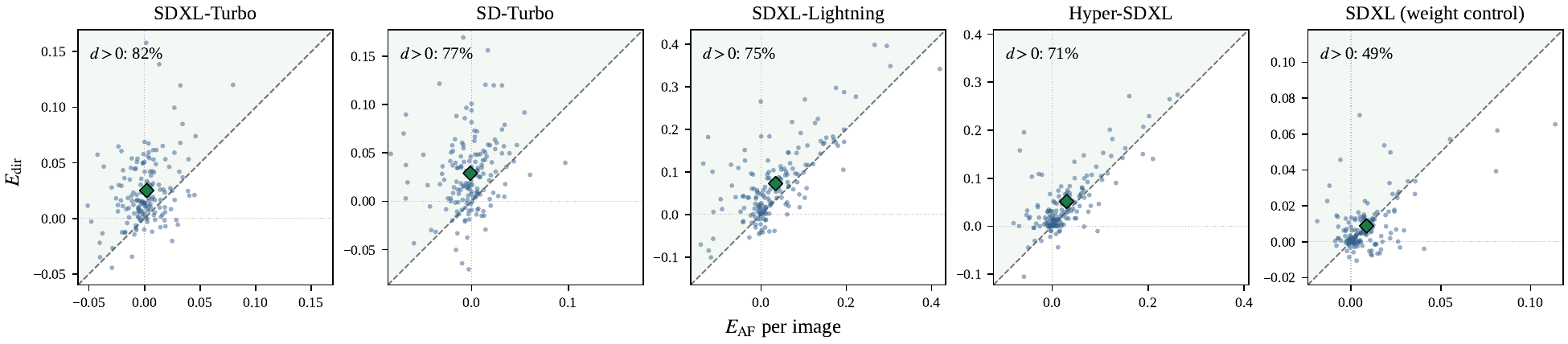}
\vspace{-13pt}
\caption{\textbf{The first-forward asymmetry holds image by image on the distilled backbones and vanishes with the original weights.} Each point is one image; the diamond marks the mean of Table~\ref{tab:appendix-firstforward}, and $d > 0$ is the fraction of images above the diagonal, where the direct correction wins. Axes are clipped at the $0.5$th and $99.5$th percentiles and scaled per panel, as the backbones operate at different resolutions.}
\label{fig:appendix-scatter}

\end{figure}

\begin{table}[h]
\centering
\caption{\textbf{At the first forward, attention forcing is not significantly different from zero on the Turbo family, while the direct correction and the gap $d = \Edir - \EAF$ are not.} Means restate the first row of Table~\ref{tab:mechanism} at full precision. The p-values are paired permutation tests (Appendix~\ref{app:stats}), whose resolution floor is $1.0\times10^{-4}$; the original-SDXL row is the weight control.}
\label{tab:appendix-firstforward}
\setlength{\tabcolsep}{6pt}
\renewcommand{\arraystretch}{1.15}
\begin{tabular}{lccccc}
\toprule
Backbone & $\EAF$ & $\Edir$ & $d$ & $p(\EAF{=}0)$ & $p(d{=}0)$ \\
\midrule
SDXL-Turbo & $+0.0020$ & $+0.0251$ & $+0.0231$ & $0.27$ & $1.0\times10^{-4}$ \\
SD-Turbo & $-0.0013$ & $+0.0289$ & $+0.0303$ & $0.52$ & $1.0\times10^{-4}$ \\
SDXL-Lightning & $+0.0339$ & $+0.0723$ & $+0.0384$ & $1.0\times10^{-4}$ & $1.0\times10^{-4}$ \\
Hyper-SDXL & $+0.0302$ & $+0.0516$ & $+0.0214$ & $1.0\times10^{-4}$ & $1.0\times10^{-4}$ \\
\midrule
Original SDXL (4-step grid) & $+0.0087$ & $+0.0088$ & $+0.0001$ & $1.0\times10^{-4}$ & $0.91$ \\
\bottomrule
\end{tabular}
\end{table}

\subsection{Multi-Step Reference Intervention}
\label{app:multistep}

Applying the identical matched single-forward protocol along the 50-step SDXL reference trajectory yields Table~\ref{tab:appendix-multistep}. Both interventions have terminal effects an order of magnitude smaller than on the native four-step trajectory, and no first-forward asymmetry comparable to Table~\ref{tab:appendix-firstforward} appears: a single early intervention, of either kind, is largely absorbed by the remaining trajectory. This is the reference behavior against which the few-step results should be read. Figure~\ref{fig:appendix-timing} places the two trajectories side by side.

\begin{table}[!htb] 
\centering
\caption{\textbf{On the reference trajectory both single-forward interventions stay an order of magnitude below the few-step effects.} Exact values at the eight probed noise levels; the corresponding curves are the right panel of Figure~\ref{fig:appendix-timing}.}
\label{tab:appendix-multistep}
\setlength{\tabcolsep}{10pt}
\renewcommand{\arraystretch}{1.15}
\begin{tabular}{cccc}
\toprule
Forward index & Noise level $\sigma$ & $\EAF$ & $\Edir$ \\
\midrule
1 & 13.12 & $+0.0015$ & $+0.0012$ \\
4 & 9.34 & $+0.0024$ & $+0.0020$ \\
8 & 6.20 & $+0.0030$ & $+0.0033$ \\
13 & 3.95 & $+0.0036$ & $+0.0040$ \\
19 & 2.46 & $+0.0028$ & $+0.0036$ \\
25 & 1.62 & $+0.0027$ & $+0.0033$ \\
31 & 1.09 & $+0.0021$ & $+0.0029$ \\
38 & 0.67 & $+0.0014$ & $+0.0025$ \\
\bottomrule
\end{tabular}
\end{table}

\begin{figure}[!htb]
\centering
\includegraphics[width=\textwidth]{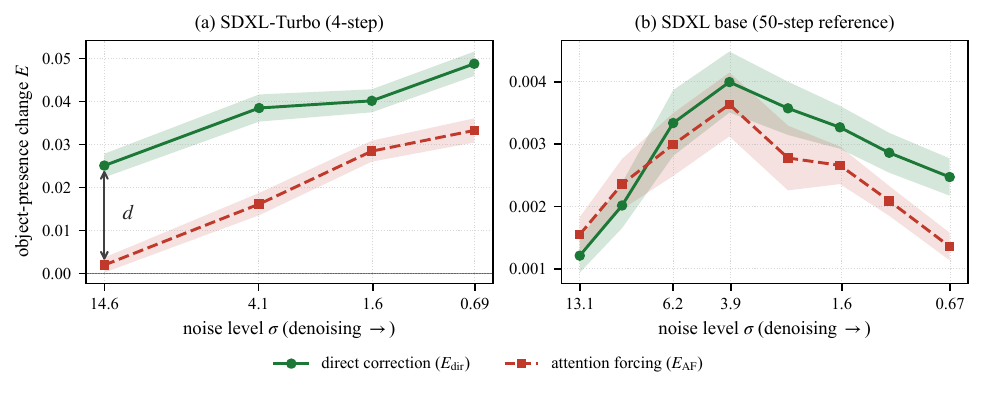}
\vspace{-13pt}
\caption{\textbf{On the reference trajectory both interventions are an order of magnitude weaker and the first-forward asymmetry vanishes.} Terminal intervention effect $E$ of a single intervened forward at each noise level; bands are $\pm 1$ standard error over $n = 163$ images. The left panel plots the SDXL-Turbo column of Table~\ref{tab:mechanism}; the right panel plots Table~\ref{tab:appendix-multistep}, on a vertical scale an order of magnitude smaller.}
\label{fig:appendix-timing}
\end{figure}

\subsection{Diagnostic Schedule Controls}
\label{app:controls}

Together with Tables~\ref{tab:appendix-firstforward} and~\ref{tab:appendix-multistep}, the two controls of this section complete a two-by-two design over the weights (distilled or original) and the schedule (four-step or dense): the intervention asymmetry appears only where distilled weights meet the coarse schedule. Both control configurations run outside the native inference regime of the respective model and are therefore diagnostic rather than a causal factorization. First, the original SDXL weights on the same four-step grid do not reproduce the intervention gap: both interventions act moderately and symmetrically (last row of Table~\ref{tab:appendix-firstforward}). Second, densifying the same SDXL-Turbo weights to 8 and 16 denoising evaluations shrinks the terminal effect of the first direct correction monotonically while the effect of attention forcing stays at its floor (Table~\ref{tab:appendix-ksweep}): additional denoising opportunities reduce how much a single early correction survives to the output, and do not unlock an early effect of attention forcing.

\begin{table}[h]
\centering
\caption{\textbf{Densifying the same Turbo weights shrinks the direct effect and does not unlock attention forcing.} First-forward effects at 4, 8, and 16 denoising evaluations; the native row anchors to Table~\ref{tab:appendix-firstforward}.}
\label{tab:appendix-ksweep}
\setlength{\tabcolsep}{10pt}
\renewcommand{\arraystretch}{1.15}
\begin{tabular}{lcccc}
\toprule
Denoising evaluations & $\EAF$ & $p(\EAF{=}0)$ & $\Edir$ & $p(\Edir{=}0)$ \\
\midrule
4 & $+0.0020$ & $0.27$ & $+0.0251$ & $1.0\times10^{-4}$ \\
8 & $+0.0018$ & $0.21$ & $+0.0122$ & $1.0\times10^{-4}$ \\
16 & $+0.0018$ & $0.09$ & $+0.0059$ & $1.0\times10^{-4}$ \\
\bottomrule
\end{tabular}
\end{table}

On the 50-step reference we additionally fix the number of forced forwards and vary only their placement, so that any difference is attributable to timing rather than to the amount of intervention; the main text quotes the direction of this control, and Table~\ref{tab:appendix-placement} gives the configurations and numbers. The two eight-forward windows sit at the extremes of the schedule, $\sigma \in [13.1, 6.2]$ and $\sigma \in [0.44, 0.04]$, and the late window wins by $0.9$~dB. Both fall far short of the sustained configurations: extending attention forcing over the last 38 forwards recovers most of the reference quality, and only the full schedule reaches it. Timing matters at a fixed budget, but on the dense trajectory attention forcing needs sustained application to act at all.

\begin{table}[H]
\centering
\caption{\textbf{At a fixed count, late attention forcing beats high-noise placement, and only the full schedule reaches the reference.} 50-step SDXL with the guidance of the reference configuration; the all-50 row is the 50-step attention-forcing reference of Table~\ref{tab:runtime}.}
\label{tab:appendix-placement}
\setlength{\tabcolsep}{10pt}
\renewcommand{\arraystretch}{1.15}
\begin{tabular}{lcccc}
\toprule
Placement & $\sigma$ range & PSNR$\uparrow$ & SSIM$\uparrow$ & LPIPS$\downarrow$ \\
\midrule
8 forwards, high-noise range & $13.1 \rightarrow 6.2$ & 11.57 & 0.255 & 0.399 \\
8 forwards, late range & $0.44 \rightarrow 0.04$ & 12.47 & 0.284 & 0.353 \\
Last 38 forwards & $3.95 \rightarrow 0.04$ & 17.17 & 0.429 & 0.221 \\
All 50 forwards & $13.1 \rightarrow 0.04$ & 17.86 & 0.441 & 0.209 \\
\bottomrule
\end{tabular}
\end{table}

\subsection{Retention Analysis}
\label{app:retention}

\textbf{Retention.} We measure how much a perturbation injected at the first forward survives to the final output. At forward $k$ the predicted clean latent is pushed a fraction $\beta = 0.1$ toward the ground-truth background inside the mask, $\hat{x}_0' = (1-\beta)\,\hat{x}_0 + \beta\,z_{\mathrm{gt}}$, re-expressed as a noise prediction, and the trajectory then runs without further intervention. Retention is the ratio of the final to the immediate displacement between the injected and base runs, $s_k = \lVert \hat{x}_0^{\mathrm{final,inj}} - \hat{x}_0^{\mathrm{final,base}} \rVert_M \,/\, \lVert z^{\mathrm{inj}} - z^{\mathrm{base}} \rVert_M$, with masked RMS norms and the denominator measured at the input of forward $k{+}1$. The ratio is verified to be independent of the dose $\beta$, so it is the slope of the linear displacement response and applies to small perturbations along this direction. At the first forward the median over images is $1.38$ on the native four-step SDXL-Turbo trajectory and $0.30$ on the 50-step reference: the few-step trajectory preserves, and can even amplify, what the first forward establishes, while the dense trajectory attenuates it.

\subsection{Residual-Frequency Analysis}
\label{app:frequency}

We compare the high-band energy of the masked region against the ground-truth background. The band is the radial range $[0.45, 1.0)$ of Nyquist of the two-dimensional FFT power, computed on the grayscale bounding-box crop of the mask resized to $128^2$ under a Hann window; the ground-truth crop is processed identically, and each image contributes the ratio of the two. Table~\ref{tab:appendix-frequency} reports geometric means over images. Without intervention the masked region carries $8.3\times$ the high-band energy of the true background, the spectral signature of the object itself. Attention forcing alone reduces this only to $1.92\times$: the residual is object-shaped high-frequency structure. The direct correction alone lands at $0.30\times$, below parity, as expected from evidence $B$ that averages background latents. ReState recovers to $0.51\times$: the object-shaped excess is eliminated, and the attention-constrained late forwards restore part of the missing texture. The remaining gap to parity is a smoother-than-real fill, not residual object structure, which is the opposite failure mode of attention forcing.

\begin{table}[H]
\centering
\caption{\textbf{Attention forcing leaves object-shaped high-frequency structure; ReState eliminates it.} Masked high-band energy as a per-image ratio to the identically processed ground truth, geometric mean over OBER-Test (SDXL-Turbo).}
\label{tab:appendix-frequency}
\setlength{\tabcolsep}{10pt}
\renewcommand{\arraystretch}{1.15}
\begin{tabular}{lc}
\toprule
Output & High-band ratio to ground truth \\
\midrule
No intervention & 8.34 \\
Attention forcing only & 1.92 \\
Direct correction only & 0.30 \\
\rowcolor{ourshl}\textbf{ReState} & 0.51 \\
\bottomrule
\end{tabular}
\end{table}

% ============================================================
% Appendix D — Method Ablations
% ============================================================
\section{Method Ablations}
\label{app:ablations}

Each ablation in this section varies one component of the default ReState configuration (Appendix~\ref{app:protocol}) on SDXL-Turbo and reports masked PSNR, SSIM, and LPIPS on OBER-Test. Tables report point estimates, and every statistical statement in the text is a two-sided paired permutation test against the default configuration under the protocol of Appendix~\ref{app:stats}.

% ------------------------------------------------------------
\subsection{Correction Strength}
\label{app:abl-strength}

Table~\ref{tab:appendix-lsweep} sweeps $\lambda$ over a $32\times$ range with all other settings fixed. Removal quality is flat: masked PSNR varies by at most $0.14$~dB across the sweep, and the default $\lambda = 8$ is within $0.01$~dB of the optimum. The realized weights $\omega_t$ shift substantially with $\lambda$, so the insensitivity is a property of the correction itself rather than of a narrow operating point.

This flatness is a statement about the strength profile within the structure that the closed form imposes, with strong corrections while noise remains and none at the final forward; that the structure itself is necessary is shown in Section~\ref{app:abl-timing}.

\begin{table}[H]
\centering
\caption{\textbf{Removal quality is insensitive to the correction strength $\lambda$.} Realized weights $\omega_t = \lambda\sigma_{t-1}^2/(1+\lambda\sigma_{t-1}^2)$ at the first three forwards; $\omega_4 = 0$ for all $\lambda$ by the closed form. The default $\lambda = 8$ is highlighted.}
\label{tab:appendix-lsweep}
\setlength{\tabcolsep}{9pt}
\renewcommand{\arraystretch}{1.15}
\begin{tabular}{c ccc ccc}
\toprule
& \multicolumn{3}{c}{Realized $\omega$} & \multicolumn{3}{c}{OBER-Test} \\
\cmidrule(lr){2-4} \cmidrule(lr){5-7}
$\lambda$ & $\omega_1$ & $\omega_2$ & $\omega_3$ & PSNR$\uparrow$ & SSIM$\uparrow$ & LPIPS$\downarrow$ \\
\midrule
1  & 0.943 & 0.722 & 0.325 & 18.61 & 0.487 & 0.194 \\
2  & 0.971 & 0.839 & 0.490 & 18.70 & 0.492 & 0.193 \\
4  & 0.985 & 0.912 & 0.658 & 18.74 & 0.496 & 0.194 \\
\rowcolor{ourshl}8  & 0.993 & 0.954 & 0.794 & 18.75 & 0.498 & 0.194 \\
16 & 0.996 & 0.977 & 0.885 & 18.73 & 0.498 & 0.195 \\
32 & 0.998 & 0.988 & 0.939 & 18.72 & 0.498 & 0.195 \\
\bottomrule
\end{tabular}
\end{table}

% ------------------------------------------------------------
\subsection{Correction Timing and Schedule Structure}
\label{app:abl-timing}

Table~\ref{tab:appendix-timing} restricts the direct correction to subsets of the four forwards, with object-excluding attention active at every forward in all rows. The windows are ordered primarily by the last corrected forward: windows ending at the third forward reach $18.47$ to $18.75$~dB on OBER-Test, windows ending at the second reach $18.18$ to $18.35$~dB, and correcting only the first forward reaches $16.85$~dB. Within a fixed last forward, earlier corrections add a secondary refinement ($+0.27$~dB from $\{3\}$ to $\{2,3\}$, $+0.17$~dB from $\{2\}$ to $\{1,2\}$). This ordering follows from state propagation: every uncorrected forward re-predicts the masked content with whatever generative freedom the remaining noise allows, so the final result is governed by the freedom that remains after the last applied correction. The same absorption principle governs trajectory length in Appendix~\ref{app:controls}, where additional denoising evaluations reduce how much a single early correction survives to the output. These windows measure deployed quality with the remaining schedule uncorrected; the persistence of a single matched perturbation is the separate measurement of Appendix~\ref{app:retention}. The two measurements are consistent: limited absorption (Appendix~\ref{app:retention}) means an early correction survives, and the ordering by the last corrected forward follows because every uncorrected forward re-predicts the masked content; the window ordering is therefore a consequence of the same absorption principle, not a counterexample to it.

Within this pattern the first-forward correction becomes redundant once the later forwards are corrected: the $\{2,3\}$ window is not significantly different from the full schedule on PSNR and LPIPS ($p = 0.54$ and $p = 0.90$), with a marginal SSIM difference ($0.496$ versus $0.498$, $p = 0.002$). ReState nevertheless keeps the full schedule, because the closed form of Eq.~\ref{eq:update} assigns every forward its weight without a window hyperparameter, and pruning to $\{2,3\}$ would introduce a discrete rule for no measurable gain.

The last row removes the boundary condition instead: keying $\omega$ to $\sigma_t$ rather than $\sigma_{t-1}$ leaves the final forward a correction of $\omega_4 \approx 0.79$, whose corrected prediction is the final output. Quality collapses perceptually, with LPIPS degrading from $0.194$ to $0.284$ and PSNR losing $1.49$~dB (both $p = 1.0\times10^{-4}$). This row also answers why the schedule does not shift weight toward the later forwards although single late windows imprint best: a late correction locks in whatever it writes, including the smoothness of the averaged evidence $B$, so the correction should be strong while the prediction is still revisable and absent once it is not. The final forward is left uncorrected, and the $\sigma_{t-1}$ keying supplies exactly this as a boundary condition of the objective rather than as an added rule.

Restoring that boundary by hand under $\sigma_t$ keying, with an explicit final-forward skip, is statistically not significantly different from the default on every metric ($18.71$ versus $18.75$~dB, $p = 0.23$; LPIPS $0.195$ versus $0.194$, $p = 0.08$). The keying is therefore not a performance choice: $\sigma_{t-1}$ absorbs the final-forward exception into the closed form. A complementary sweep isolates the terminal weight: with $\omega_1$--$\omega_3$ fixed at their closed-form values, raising $\omega_4$ from $0$ to $0.2$, $0.5$, and $0.794$ degrades masked LPIPS monotonically from $0.194$ to $0.210$, $0.251$, and $0.283$ (significant from $\omega_4 = 0.2$, $p = 1.0\times10^{-4}$), with PSNR and SSIM following from $\omega_4 \geq 0.5$. The benefit of the terminal release is therefore separable from the earlier weights: a terminal direct correction pulls the output toward the smoothness of the averaged evidence $B$ exactly where fine texture is synthesized.

A constant-strength schedule isolates the weight profile itself: $\omega_1{=}\omega_2{=}\omega_3{=}c$ with $\omega_4{=}0$, so the terminal release is kept and only the profile changes. With $c$ matched to the mean of the closed-form weights ($c{=}0.914$), the default is never significantly worse on any metric and is significantly better in three of six benchmark--metric cells (masked LPIPS $0.1940$ versus $0.1951$ on OBER-Test, $p{=}0.027$; PSNR $17.21$ versus $17.18$, $p{=}9.0\times10^{-4}$, and LPIPS $0.2728$ versus $0.2740$, $p{=}1.0\times10^{-4}$, on RORD); the differences are at most $0.03$~dB and $0.0012$ LPIPS. An exploratory sweep over $c\in\{0.5, 0.794, 1.0\}$ gives the same picture: quality is flat for $c$ between $0.794$ and $1.0$ --- the setting $c{=}\omega_3$ is itself marginally better than the default on RORD PSNR and LPIPS while worse on SSIM --- and degrades clearly at $c{=}0.5$. The weight profile is therefore not where the performance comes from; the operative factors are the terminal release and an adequate overall strength, and the closed form of Eq.~\ref{eq:update} supplies both from one objective with the single hyperparameter $\lambda$, with no window or constant to tune. At the one-step extreme ($T{=}1$), every forward is terminal and the correction vanishes identically; this is the boundary condition operating as intended, and the removal claims of this paper are for the four-step schedule.
\begin{table}[H]
\centering
\caption{\textbf{The correction is cumulative and the final forward is left uncorrected.} The direct correction is applied only at the listed forwards; object-excluding attention stays active at every forward in all rows. The last row keys $\omega$ to $\sigma_t$, so the final forward retains a correction of $\omega_4 \approx 0.79$ whose output is final. Constant rows replace the closed-form weights by a single constant $c$ while keeping $\omega_4 = 0$.}
\label{tab:appendix-timing}
\setlength{\tabcolsep}{10pt}
\renewcommand{\arraystretch}{1.15}
\begin{tabular}{lccc}
\toprule
Corrected forwards & PSNR$\uparrow$ & SSIM$\uparrow$ & LPIPS$\downarrow$ \\
\midrule
None (attention forcing only) & 15.50 & 0.348 & 0.269 \\
$\{1\}$ & 16.85 & 0.411 & 0.237 \\
$\{2\}$ & 18.18 & 0.469 & 0.203 \\
$\{3\}$ & 18.47 & 0.482 & 0.200 \\
$\{1,2\}$ & 18.35 & 0.477 & 0.198 \\
$\{2,3\}$ & 18.74 & 0.496 & 0.194 \\
\rowcolor{ourshl}$\{1,2,3\}$ (default) & 18.75 & 0.498 & 0.194 \\
Constant $c{=}0.914$ (mean-matched), $\{1,2,3\}$ & 18.73 & 0.497 & 0.195 \\
Constant $c{=}0.5$, $\{1,2,3\}$ & 18.53 & 0.486 & 0.197 \\
Constant $c{=}0.794$ ($=\omega_3$), $\{1,2,3\}$ & 18.76 & 0.496 & 0.194 \\
Constant $c{=}1.0$, $\{1,2,3\}$ & 18.68 & 0.497 & 0.196 \\
$\{1,2,3,4\}$ ($\omega$ keyed to $\sigma_t$) & 17.26 & 0.466 & 0.284 \\
\bottomrule
\end{tabular}
\end{table}

% ------------------------------------------------------------
\subsection{Background Evidence Source}
\label{app:abl-evidence}

Replacing the retrieved $B$ (Table~\ref{tab:appendix-evidence}) shows both directions of the dependence: a global background mean, which removes spatial correspondence, costs $0.59$~dB, while the ground-truth background as an oracle target gains $3.57$~dB. The correction mechanism is therefore not the bottleneck; the quality of the evidence is.

\subsection{Per-Forward Correction versus One-Shot Latent Initialization}
\label{app:abl-prefill}

The preceding ablations vary the evidence and the schedule; this one varies only how the same evidence enters sampling, and asks whether the gain of ReState is explained by obtaining a good proposal $B$ alone. Both configurations share the first generation forward, the retrieved proposal $B$, the seed, and the noise realization, and both run four denoiser evaluations with identical attention forcing and anchoring. ReState applies its per-forward correction as in the main text. The initialization alternative instead writes $B$ into the trajectory once~\citep{meng2022sdedit}: at the noise level $\sigma_{\mathrm{next}}$ entering the second forward, the masked region is re-initialized as $z^{\mathrm{init}} = M \odot (B + \sigma_{\mathrm{next}}\,\epsilon) + (1-M) \odot (z_{\mathrm{src}} + \sigma_{\mathrm{next}}\,\epsilon)$, after which the remaining three forwards run without any prediction correction. This differs from the $\{1\}$ window of Appendix~\ref{app:abl-timing}, which corrects the clean prediction and lets the sampler form the next latent; the initialization reconstructs the noisy latent itself, so the two are separate measurements. Masked LPIPS was declared the primary metric before the runs, since a one-shot fill can favor PSNR through smoothing. Table~\ref{tab:appendix-prefill} reports the outcome: ReState is better on the primary metric on both benchmarks ($p = 1.0\times10^{-4}$; per-image win rate $62.0\%$ on OBER-Test, $62.7\%$ on RORD), and on masked PSNR and SSIM as well (win rates $69$--$90\%$). The gain of ReState is therefore not explained by the background proposal alone: at the same target and the same budget, applying the proposal as a per-forward prediction correction outperforms writing it into the latent once.

\begin{table}[!htb]
\centering
\newcommand{\yes}{\textcolor{green!60!black}{\checkmark}}
\caption{\textbf{Per-forward correction versus one-shot latent initialization: the correction procedure itself adds value.} Both rows share the retrieved proposal $B$, the first generation forward, the noise realization, attention forcing, and anchoring, and use four denoiser evaluations. The initialization row writes $B$ into the noisy latent once before the second forward; ReState applies it as a per-forward prediction correction. The last row reports ReState minus initialization; masked LPIPS is the pre-declared primary metric, and every gain is significant under the protocol of Appendix~\ref{app:stats}.} Paired gains are computed from unrounded per-image values.
\label{tab:appendix-prefill}
\setlength{\tabcolsep}{5pt}
\renewcommand{\arraystretch}{1.15}
\small
\begin{tabular}{lcc ccc ccc}
\toprule
& & & \multicolumn{3}{c}{OBER-Test} & \multicolumn{3}{c}{RORD} \\
\cmidrule(lr){4-6} \cmidrule(lr){7-9}
Method & Same $B$ & NFE & PSNR$\uparrow$ & SSIM$\uparrow$ & LPIPS$\downarrow$ & PSNR$\uparrow$ & SSIM$\uparrow$ & LPIPS$\downarrow$ \\
\midrule
Initialization with $B$ & \yes  & 4 & 17.90 & 0.454 & 0.211 & 16.39 & 0.417 & 0.284 \\
\rowcolor{ourshl}ReState (Ours) & \yes  & 4 & \textbf{18.75} & \textbf{0.498} & \textbf{0.194} & \textbf{17.21} & \textbf{0.451} & \textbf{0.273} \\
\midrule
Paired gain & & & $+0.85$ & $+0.044$ & $-0.017$ & $+0.83$ & $+0.033$ & $-0.012$ \\
\bottomrule
\end{tabular}
\end{table}

% ------------------------------------------------------------
\subsection{Retrieval Ratio}
\label{app:abl-ratio}

Figure~\ref{fig:appendix-rsweep} widens the retained fraction $r$ over three orders of magnitude, up to the softmax-weighted average over the entire background. Below the default the bank is too small: $r = 0.001$ loses $0.33$~dB ($p = 1.0\times10^{-4}$) and $r = 0.005$ still registers ($p = 0.010$). Above the default the sweep locates only a shallow numerical optimum: PSNR peaks around $r = 0.1$ to $0.25$, at most $0.15$~dB above the default and not significant on any metric ($p \geq 0.08$), while LPIPS is never significantly better than at the default (all $p \geq 0.15$) and is numerically worst at the full bank. We retain $r = 0.01$: the peak is not statistically distinguishable from it, and adopting a peak located on the evaluation benchmark would in any case amount to selecting $r$ on the test set.

The sweep separates the two design choices of the retrieval. Even the full-bank softmax average stays at the level of the default on PSNR ($18.80$) while its LPIPS is the worst in the sweep, whereas removing the weighting entirely, the unweighted global mean of Table~\ref{tab:appendix-evidence}, costs $0.59$~dB. The robustness to $r$ is therefore a property of the similarity weighting rather than of a tuned retained fraction: across the entire range $r \in [0.005, 1]$, masked PSNR stays within $0.15$~dB of the default.

\begin{figure}[!htb]
\centering
\includegraphics[width=0.82\linewidth]{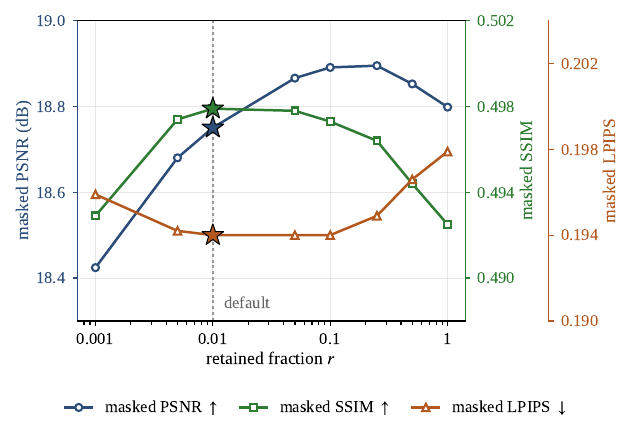}
\vspace{-13pt}
\caption{\textbf{Removal quality is insensitive to the retained fraction $r$ across three orders of magnitude.} Masked metrics on OBER-Test with the bank widened up to the similarity-weighted average over the entire background ($r = 1$); the stars mark the default $r = 0.01$. Exact test statistics are given in the text.}
\label{fig:appendix-rsweep}
\end{figure}

% ------------------------------------------------------------
\subsection{Probe Feature Source}
\label{app:abl-probe}

The background bank is built from features captured at the first generation forward ($\sigma = 14.6$, object-excluding attention active), at the output of the first up-block of the denoiser; the native resolution of this map at $512^2$ is $32 \times 32$, and it is bilinearly resampled to the $64 \times 64$ correspondence grid of the latent. Table~\ref{tab:appendix-probe} rebuilds the bank from alternative feature sources and regenerates with the otherwise identical pipeline. 

Every variant harvests its features in a separate pass of the diagnostic harness, so the generation trajectory itself is identical across rows and stays at the native four evaluations. For the suppression-off and noise-level rows this pass is required by construction: in the default the probe and the first generation forward are the same pass, run under suppression at $\sigma = 14.6$, so changing the probe condition inside that forward would change the generation as well. The layer rows need no such pass in principle and could share the generation forward exactly as the default does.

The shared probe is robust to the attention intervention under which it runs. Harvesting the same features without object-excluding attention changes nothing measurable ($18.72$ versus $18.75$~dB, $p \geq 0.46$ on every metric). Keeping the suppression active therefore costs nothing, and it requires no separate pass.

The noise level of the probe input, in contrast, is functional rather than incidental. Harvesting from a nearly clean input, $\mathcal{E}(x_{\mathrm{src}}) + \sigma_{\min}\epsilon$ at $\sigma_{\min} = 0.693$, degrades sharply to $17.44$~dB and LPIPS $0.214$ (all $p = 1.0\times10^{-4}$), while features from the second forward ($\sigma = 4.08$) remain at par ($p \geq 0.33$). This pattern is consistent with heavy noise acting as content suppression on the probe input. 

At high $\sigma$ the source object is largely erased from the input, so masked-position features are driven by surrounding context, whereas at low $\sigma$ the object remains present and reaches the masked features through convolutional receptive fields that attention suppression does not gate, and the retrieval then favors background patches resembling the object. We advance this as the consistent reading of the pattern rather than as an isolated causal claim.

The layer rows separate semantic depth from resolution. Mid-block features at native $16 \times 16$ are not significantly different from the default ($p \geq 0.62$ on every metric), while the encoder-side block at the same $32 \times 32$ native resolution as the default degrades strongly ($17.96$~dB, LPIPS $0.216$, $p \leq 0.002$), and the decoder block with native $64 \times 64$ resolution also loses ($18.42$~dB, LPIPS $0.207$; $p \leq 0.002$ on SSIM and LPIPS, $p = 0.14$ on PSNR). Correspondence quality therefore tracks where the features sit in the denoiser rather than their native resolution, and the bilinear resampling of the default is not the limiting factor: the block that needs no resampling performs worse than the resampled default.

The bank values are fixed to clean source latents, so the only quantity that can change during sampling is the correspondence itself. Re-computing the similarity weights at every generation forward from its own shared features, with no auxiliary pass and the evidence space unchanged, raises quality slightly ($19.02$ versus $18.75$~dB, LPIPS $0.190$ versus $0.194$), while the retrieved positions change substantially (mean top-$k$ overlap of $0.39$ to $0.43$ against the first-forward matching). 

The gain is consistent with the content-suppression reading above: features from the later forwards are harvested from a trajectory whose masked content has already been corrected toward background, so the probe input is object-free by prior correction rather than by noise. The default retains the single shared harvest as the minimal-intervention configuration, and we read the per-step update as the most immediate route toward the evidence headroom of Table~\ref{tab:appendix-evidence}.

\begin{table}[!htb]
\centering
\caption{\textbf{The shared noisy probe is the operative choice: near-clean inputs contaminate the bank, while attention suppression and the feature layer admit slack.} Each row rebuilds the background bank from a different feature source and regenerates with the identical four-forward pipeline; every variant harvests its features in a separate diagnostic pass, so the deployed method stays at the native count.}
\label{tab:appendix-probe}
\setlength{\tabcolsep}{10pt}
\renewcommand{\arraystretch}{1.15}
\begin{tabular}{lccc}
\toprule
Feature source & PSNR$\uparrow$ & SSIM$\uparrow$ & LPIPS$\downarrow$ \\
\midrule
\rowcolor{ourshl}Shared first forward (default) & 18.75 & 0.498 & 0.194 \\
No attention suppression & 18.72 & 0.497 & 0.194 \\
Near-clean input ($\sigma = 0.693$) & 17.44 & 0.474 & 0.214 \\
Second forward ($\sigma = 4.08$) & 18.87 & 0.499 & 0.193 \\
Encoder block (native $32^2$) & 17.96 & 0.475 & 0.216 \\
Mid block (native $16^2$) & 18.75 & 0.499 & 0.194 \\
Decoder block (native $64^2$) & 18.42 & 0.482 & 0.207 \\
\bottomrule
\end{tabular}
\end{table}
% ============================================================
\section{Additional Results}
\label{app:additional}

\subsection{Full Quantitative Tables}
\label{app:quant}

Table~\ref{tab:appendix-full} collects the full quantitative comparison on both benchmarks: the training-free block repeats the SDXL-Turbo rows of Table~\ref{tab:main}, and the lower block adds the training- and optimization-based methods, each run from its released checkpoint under the official default inference settings (Appendix~\ref{app:protocol}). SDXL-Inpainting and FLUX.1-Fill operate and are evaluated at $1024^2$, and all remaining methods at $512^2$. Latency is the median per-image end-to-end wall-clock on OBER-Test and VRAM is the peak allocation, both measured on a single A100 80GB for every row. 

OSOR \citep{osor2026} fine-tunes SDXL-Inpainting and FLUX.1-Fill on CORNE, a set of 280K removal pairs curated from single-edit instruction triplets, and additionally refines the FLUX variant for 1K steps on the captured subset of OBER. This refinement is described only for the FLUX variant; because both variants share the same training pipeline, we conservatively treat the same refinement as possible for the SDXL-Inpainting release as well, so out-of-distribution evaluation on OBER-Test is not guaranteed for either variant, and OSOR is excluded from the matched comparison of Table~\ref{tab:appendix-full}. 

For reference, its SDXL-Inpainting variant reaches $23.19$~dB masked PSNR on OBER-Test and $18.77$~dB on RORD at $0.40$~s and $7.2$~GiB under the identical protocol. The training data of every training-based comparison is classified in Table~\ref{tab:appendix-training-data}.

\begin{table}[!htb]
\centering
\caption{\textbf{Without any training, ReState remains competitive with the training-based methods at the lowest latency.} Latency and VRAM are measured as described in the text; the training-free rows repeat the SDXL-Turbo block of Table~\ref{tab:main}.}
\label{tab:appendix-full}
\small
\setlength{\tabcolsep}{4.5pt}
\renewcommand{\arraystretch}{1.12}
\begin{tabular}{l ccc ccc cc}
\toprule
& \multicolumn{3}{c}{OBER-Test} & \multicolumn{3}{c}{RORD} & & \\
\cmidrule(lr){2-4}\cmidrule(lr){5-7}
Method & PSNR$\uparrow$ & SSIM$\uparrow$ & LPIPS$\downarrow$ & PSNR$\uparrow$ & SSIM$\uparrow$ & LPIPS$\downarrow$ & Lat.\,(s) & VRAM \\
\midrule
\multicolumn{9}{l}{\textit{Training/Optimization-based methods}} \\
SDXL-Inpainting & 11.65 & 0.365 & 0.391 & 10.57 & 0.348 & 0.403 & 3.75 & 9.0GB \\
FLUX.1-Fill & 11.04 & 0.376 & 0.405 & 12.33 & 0.415 & 0.366 & 15.9 & 33.8GB \\
PowerPaint\venue{ECCV'24} & 14.89 & 0.463 & 0.255 & 14.91 & 0.438 & 0.282 & 3.86 & 5.1GB \\
CLIPAway\venue{NeurIPS'24} & 15.71 & 0.426 & 0.246 & 14.84 & 0.406 & 0.268 & 7.83 & 9.8GB \\
EntityErasure\venue{CVPR'25} & 15.70 & 0.496 & 0.222 & 15.70 & 0.460 & 0.247 & 3.21 & 12.5GB \\
SmartEraser\venue{CVPR'25} & 18.67 & 0.510 & 0.170 & 17.81 & 0.463 & 0.193 & 6.12 & 7.7GB \\
EraseLoRA\venue{ECCV'26} & 18.78 & 0.481 & 0.175 & 16.77 & 0.433 & 0.237 & 198s & 33.5GB \\
\midrule
\multicolumn{9}{l}{\textit{Training-free methods}} \\
SDXL-Turbo (4 steps) & 9.86 & 0.133 & 0.435 & 7.55 & 0.074 & 0.516 & 0.247 & 7.1GB \\
$+$ Attentive Eraser\venue{AAAI'25} & 7.09 & 0.046 & 0.514 & 7.35 & 0.065 & 0.538 & 0.338 & 7.1GB \\
$+$ DesignEdit\venue{AAAI'25} & 15.50 & 0.348 & 0.269 & 14.34 & 0.326 & 0.334 & 0.281 & 7.1GB \\
$+$ AdaEraser\venue{ICML'26} & 11.71 & 0.205 & 0.372 & 9.47 & 0.122 & 0.462 & 1.145 & 7.1GB \\
\rowcolor{ourshl}$+$ \textbf{ReState (Ours)} & 18.75 & 0.498 & 0.194 & 17.21 & 0.451 & 0.273 & 0.373 & 7.1GB \\
\bottomrule
\end{tabular}
\end{table}

% ------------------------------------------------------------
\subsection{Full Removal on Additional Distilled Backbones}
\label{app:additional-backbones}

Table~\ref{tab:appendix-backbones} extends the full removal comparison of Table~\ref{tab:main} to SDXL-Lightning and Hyper-SDXL under the identical frozen configuration: the same $\lambda = 8$, $r = 0.01$, probe layer, and schedule keying, with no per-backbone tuning. On both backbones and both benchmarks ReState improves over DesignEdit, the strongest training-free baseline of Table~\ref{tab:main}, on every metric, by $+1.46$ to $+2.39$~dB masked PSNR, and over the no-intervention baseline by $6.9$ to $7.3$~dB; every difference is significant at $p = 1.0\times10^{-4}$ under the protocol of Appendix~\ref{app:stats}.

The margins over attention forcing are smaller than on the Turbo family ($+3.25$~dB on SDXL-Turbo), in the direction the intervention analysis predicts. On Lightning and Hyper-SDXL the first-forward terminal effect of attention forcing is already positive ($+0.034$ and $+0.030$, Table~\ref{tab:appendix-firstforward}) rather than not significantly different from zero as on the Turbo family, so attention forcing recovers part of the removal on its own and the relative advantage of the direct correction narrows. On both benchmarks the ordering of the four ReState margins follows the ordering of that first-forward effect across the four distilled backbones; we read this as consistency with the diagnosis rather than as a causal prediction test.

The removal pipeline is unchanged; the two backbones operate and are evaluated at $1024^2$ (Appendix~\ref{app:stats}), so absolute metric levels are not comparable to the $512^2$ backbones and comparisons are within each backbone, as in Table~\ref{tab:mechanism}. Attentive Eraser and AdaEraser are omitted from this extension: both already fall below DesignEdit or the no-intervention baseline on the Turbo backbones of Table~\ref{tab:main}.

\begin{table}[H]
\centering
\caption{\textbf{The single frozen configuration transfers: ReState improves every masked metric on two further distilled backbones.} Full removal at $1024^2$ with the identical hyperparameters as Table~\ref{tab:main} and no per-backbone tuning; rows with $+$ apply the method on the backbone of their block. Bold is best per column within a block; test statistics are given in the text.}
\label{tab:appendix-backbones}
\setlength{\tabcolsep}{7pt}
\renewcommand{\arraystretch}{1.15}
\begin{tabular}{l ccc ccc}
\toprule
& \multicolumn{3}{c}{OBER-Test} & \multicolumn{3}{c}{RORD} \\
\cmidrule(lr){2-4}\cmidrule(lr){5-7}
Method & PSNR$\uparrow$ & SSIM$\uparrow$ & LPIPS$\downarrow$ & PSNR$\uparrow$ & SSIM$\uparrow$ & LPIPS$\downarrow$ \\
\midrule
SDXL-Lightning (4 steps) & 10.51 & 0.393 & 0.424 & 9.60 & 0.318 & 0.445 \\
$+$ DesignEdit\venue{AAAI'25} & 15.95 & 0.517 & 0.296 & 15.08 & 0.475 & 0.325 \\
\rowcolor{ourshl}$+$ \textbf{ReState (Ours)} & \textbf{17.51} & \textbf{0.578} & \textbf{0.271} & \textbf{16.54} & \textbf{0.528} & \textbf{0.308} \\
\midrule
Hyper-SDXL (4 steps) & 10.21 & 0.346 & 0.434 & 9.21 & 0.273 & 0.455 \\
$+$ DesignEdit\venue{AAAI'25} & 15.20 & 0.487 & 0.315 & 14.14 & 0.434 & 0.358 \\
\rowcolor{ourshl}$+$ \textbf{ReState (Ours)} & \textbf{17.49} & \textbf{0.574} & \textbf{0.273} & \textbf{16.53} & \textbf{0.527} & \textbf{0.315} \\
\bottomrule
\end{tabular}
\end{table}

\subsection{Multi-Step Baseline Comparison}
\label{app:multistep-baseline}

Table~\ref{tab:appendix-multistep50} runs the attention-forcing baselines of Table~\ref{tab:main} at their native 50-step count, under the identical reference configuration as Table~\ref{tab:appendix-placement} (same guidance, seed, and background anchoring); the DesignEdit row is the 50-step reference of Table~\ref{tab:runtime}. The methods that degrade under the four-step port operate correctly at their native step count: Attentive Eraser reaches $18.79$~dB against $7.09$~dB in Table~\ref{tab:main}, and AdaEraser $15.66$ against $11.71$~dB. The few-step failures of Section~\ref{sec:experiments} are therefore consistent with a regime effect rather than a port artifact, and the recovery of Attentive Eraser is exactly the calibration argument of the main text: its SARG amplification is calibrated for 50-step sampling and acts as intended there.

Against this 50-step field, ReState at its native four steps ($18.75$~dB, SSIM $0.498$, LPIPS $0.194$) exceeds every 50-step attention configuration on SSIM and LPIPS, and sits within $0.04$~dB of the best attention-configuration PSNR (Attentive Eraser), at $6.7\times$ lower latency (Table~\ref{tab:runtime}).

Transferring ReState unchanged to the 50-step schedule under-performs ($17.57$~dB, LPIPS $0.231$): the $\sigma_{t-1}$-keyed weight keeps the correction strong across the many low-noise forwards, so the smooth evidence $B$ is imprinted repeatedly in the range where fine texture is synthesized. 

Restricting the correction to transitions with $\sigma_{t-1} \geq \theta$ recovers quality: over $\theta = \infty, 4.08, 1.61, 0.693, 0.1, 0$ masked PSNR moves as $17.86, 18.82, 18.83, 18.21, 17.57, 17.57$, a plateau at $\theta = 4.08$ to $1.61$ that recovers past the naive transfer to the level of the strongest 50-step attention configuration. We report the sweep in full; the windowed row of the table is the plateau configuration and is a diagnostic, not part of the method. Notably, its LPIPS ($0.181$) improves on the four-step default ($0.194$), indicating that part of the under-textured residual of Appendix~\ref{app:frequency} reflects the limited free-synthesis budget of the coarse schedule rather than the correction itself.

This behavior refines, and is predicted by, the schedule principle of Appendix~\ref{app:abl-timing}: what must be preserved is uncorrected synthesis freedom before terminal rendering. On the native coarse schedule the closed-form boundary $\omega \to 0$ provides exactly this, because a single low-noise forward remains; on a dense schedule the boundary condition and the freedom requirement separate, and an explicit window substitutes for it. This is one more reason the operating point of the method is the native few-step schedule.

\begin{table}[H]
\centering
\caption{\textbf{Attention forcing recovers in its 50-step regime, and 4-step ReState matches the strongest 50-step configuration at $6.7\times$ lower latency.} OBER-Test under the 50-step reference configuration; the 4-step port numbers of the same methods are in Table~\ref{tab:main}, and the correction-window sweep is given in the text.}
\label{tab:appendix-multistep50}
\setlength{\tabcolsep}{10pt}
\renewcommand{\arraystretch}{1.15}
\begin{tabular}{lccc}
\toprule
Configuration at 50 steps & PSNR$\uparrow$ & SSIM$\uparrow$ & LPIPS$\downarrow$ \\
\midrule
$+$ DesignEdit\venue{AAAI'25} (reference) & 17.86 & 0.441 & 0.209 \\
$+$ Attentive Eraser\venue{AAAI'25} & 18.79 & 0.462 & 0.225 \\
$+$ AdaEraser\venue{ICML'26} & 15.66 & 0.399 & 0.274 \\
$+$ ReState, every forward corrected & 17.57 & 0.479 & 0.231 \\
$+$ ReState, corrections at $\sigma_{t-1} \geq 1.61$ & 18.83 & 0.485 & 0.181 \\
\midrule
\rowcolor{ourshl}ReState on SDXL-Turbo (4 steps; Table~\ref{tab:main}) & 18.75 & 0.498 & 0.194 \\
\bottomrule
\end{tabular}
\end{table}

\subsection{Limited Re-tuning of Baselines at Four Steps}
\label{app:retuning}
 
The shared-pipeline ports of Table~\ref{tab:main} carry each method's official hyperparameters onto the four-step schedule. This section measures how much a limited re-tuning recovers, with one pre-registered knob per method: the SARG scale of Attentive Eraser, swept over $\{1.0, 0.5, 0.25, 0.1\}\times$ its official value of $9.0$, and the set of forwards at which the hard key-masking of DesignEdit is applied, from the full schedule down to the last forward. All configurations run on SDXL-Turbo with seed $0$ over both complete benchmarks (Table~\ref{tab:appendix-retuning}).
 
To separate configuration selection from evaluation, the best configuration by masked PSNR is selected on one benchmark and reported on the other. Both directions select the same configuration for each method --- the $0.1\times$ scale for Attentive Eraser and the official full-schedule masking for DesignEdit --- so the selection is stable. Rescaling SARG recovers Attentive Eraser from $7.09$ to $15.34$~dB on OBER-Test, while DesignEdit degrades monotonically as the masking window shrinks, so its official port is already its best four-step configuration. Against ReState, the selected configurations remain $3.03$ to $3.41$~dB lower in masked PSNR, and every gap on every masked metric is significant. The recovery pattern supports the calibration reading of the main text. As the SARG scale approaches the unamplified setting, Attentive Eraser reduces to hard key-masking with its first-step suppression, and its scores converge to those of DesignEdit (an auxiliary run at an absolute scale of $1.0$ reaches $15.51$~dB versus $15.50$~dB for DesignEdit on OBER-Test). Re-tuning therefore removes the mis-calibrated amplification rather than adding a separate gain: the re-tuned baseline returns to the attention-masking level, and the remaining gap to ReState is the regime component discussed in Section~\ref{sec:discussion}.
 
\begin{table}[H]
\centering
\caption{\textbf{Limited re-tuning does not close the gap: the best four-step configuration of each baseline remains about $3$~dB below ReState in masked PSNR.} One pre-registered knob per method (SARG scale for Attentive Eraser; masked-forward window for DesignEdit), SDXL-Turbo, seed $0$, full benchmarks. Configuration selection is cross-benchmark by masked PSNR and both directions select the same configuration (marked).}
\label{tab:appendix-retuning}
\small
\setlength{\tabcolsep}{5pt}
\renewcommand{\arraystretch}{1.15}
\begin{tabular}{ll ccc ccc}
\toprule
& & \multicolumn{3}{c}{OBER-Test} & \multicolumn{3}{c}{RORD} \\
\cmidrule(lr){3-5}\cmidrule(lr){6-8}
Method & Configuration & PSNR$\uparrow$ & SSIM$\uparrow$ & LPIPS$\downarrow$ & PSNR$\uparrow$ & SSIM$\uparrow$ & LPIPS$\downarrow$ \\
\midrule
\multirow{4}{*}{Attentive Eraser \venue{AAAI'25}} & SARG $9.0$ ($1.0\times$, official port) & 7.09 & 0.046 & 0.514 & 7.35 & 0.065 & 0.538 \\
 & SARG $4.5$ ($0.5\times$) & 7.61 & 0.054 & 0.509 & 7.54 & 0.056 & 0.537 \\
 & SARG $2.25$ ($0.25\times$) & 12.92 & 0.264 & 0.377 & 12.56 & 0.277 & 0.416 \\
 & SARG $0.9$ ($0.1\times$, selected) & 15.34 & 0.335 & 0.277 & 14.18 & 0.310 & 0.342 \\
\midrule
\multirow{4}{*}{DesignEdit \venue{AAAI'25}} & all 4 forwards (official port, selected) & 15.50 & 0.348 & 0.269 & 14.34 & 0.326 & 0.334 \\
 & last 3 forwards & 15.48 & 0.345 & 0.270 & 14.31 & 0.323 & 0.336 \\
 & last 2 forwards & 14.09 & 0.270 & 0.313 & 12.91 & 0.213 & 0.383 \\
 & last 1 forward & 12.15 & 0.183 & 0.370 & 10.28 & 0.105 & 0.444 \\
\midrule
\rowcolor{ourshl}\textbf{ReState (Ours)} & default & \textbf{18.75} & \textbf{0.498} & \textbf{0.194} & \textbf{17.21} & \textbf{0.451} & \textbf{0.273} \\
\bottomrule
\end{tabular}
\end{table}
 
% ------------------------------------------------------------
\subsection{Seed Robustness}
\label{app:seeds}
 
All main-text numbers use seed $0$, which fixes the initial noise shared by every method. To measure the dependence on this choice, every training-free row of Table~\ref{tab:main} is re-run with seeds $\{0,1,2\}$ on both main backbones and both benchmarks; the seed-$0$ runs reproduce Table~\ref{tab:main} exactly. Table~\ref{tab:appendix-seeds} reports means and standard deviations over the three seeds. The ranking is unchanged in every cell: ReState is first on all three masked metrics for every seed, backbone, and benchmark ($36$ of $36$ cells), and its paired advantage over DesignEdit is individually significant for each seed ($36$ tests, all $p = 1.0\times10^{-4}$). ReState also varies least across seeds: its masked-PSNR standard deviation is $0.02$ to $0.21$~dB, versus $0.31$ to $0.81$~dB for DesignEdit.
 
\begin{table}[H]
\centering
\caption{\textbf{The ranking is stable across seeds: ReState is first on every masked metric for every seed, backbone, and benchmark.} Mean $\pm$ sample standard deviation over seeds $\{0,1,2\}$. Per-seed ReState--DesignEdit paired differences are individually significant in all $36$ tests.}
\label{tab:appendix-seeds}
\small
\setlength{\tabcolsep}{3.5pt}
\renewcommand{\arraystretch}{1.15}
\newcommand{\ms}[2]{#1\,{\scriptsize$\pm$\,#2}}
\fitbox{%
\begin{tabular}{l ccc ccc}
\toprule
& \multicolumn{3}{c}{OBER-Test} & \multicolumn{3}{c}{RORD} \\
\cmidrule(lr){2-4}\cmidrule(lr){5-7}
Method & PSNR$\uparrow$ & SSIM$\uparrow$ & LPIPS$\downarrow$ & PSNR$\uparrow$ & SSIM$\uparrow$ & LPIPS$\downarrow$ \\
\midrule
SDXL-Turbo (4 steps) & \ms{9.91}{0.67} & \ms{0.138}{0.032} & \ms{0.431}{0.025} & \ms{7.52}{0.25} & \ms{0.065}{0.017} & \ms{0.519}{0.007} \\
$+$ DesignEdit \venue{AAAI'25} & \ms{15.39}{0.69} & \ms{0.344}{0.031} & \ms{0.272}{0.016} & \ms{14.16}{0.34} & \ms{0.308}{0.035} & \ms{0.340}{0.011} \\
$+$ Attentive Eraser \venue{AAAI'25} & \ms{7.14}{0.09} & \ms{0.047}{0.005} & \ms{0.514}{0.002} & \ms{7.39}{0.07} & \ms{0.060}{0.005} & \ms{0.538}{0.001} \\
$+$ AdaEraser \venue{ICML'26} & \ms{11.66}{0.48} & \ms{0.203}{0.026} & \ms{0.376}{0.014} & \ms{9.25}{0.43} & \ms{0.108}{0.027} & \ms{0.472}{0.014} \\
\rowcolor{ourshl}$+$ \textbf{ReState (Ours)} & \textbf{\ms{18.77}{0.13}} & \textbf{\ms{0.496}{0.005}} & \textbf{\ms{0.192}{0.002}} & \textbf{\ms{17.19}{0.02}} & \textbf{\ms{0.447}{0.004}} & \textbf{\ms{0.271}{0.002}} \\
\midrule
SD-Turbo (4 steps) & \ms{11.17}{0.69} & \ms{0.198}{0.007} & \ms{0.379}{0.008} & \ms{9.99}{0.30} & \ms{0.168}{0.012} & \ms{0.436}{0.002} \\
$+$ DesignEdit \venue{AAAI'25} & \ms{14.73}{0.81} & \ms{0.319}{0.010} & \ms{0.281}{0.016} & \ms{13.51}{0.31} & \ms{0.270}{0.006} & \ms{0.354}{0.007} \\
$+$ Attentive Eraser \venue{AAAI'25} & \ms{7.23}{0.15} & \ms{0.064}{0.003} & \ms{0.517}{0.003} & \ms{7.28}{0.06} & \ms{0.057}{0.003} & \ms{0.554}{0.001} \\
$+$ AdaEraser \venue{ICML'26} & \ms{12.28}{0.85} & \ms{0.236}{0.009} & \ms{0.346}{0.013} & \ms{10.56}{0.36} & \ms{0.181}{0.008} & \ms{0.420}{0.004} \\
\rowcolor{ourshl}$+$ \textbf{ReState (Ours)} & \textbf{\ms{18.02}{0.21}} & \textbf{\ms{0.485}{0.004}} & \textbf{\ms{0.189}{0.004}} & \textbf{\ms{16.46}{0.11}} & \textbf{\ms{0.438}{0.002}} & \textbf{\ms{0.281}{0.002}} \\
\bottomrule
\end{tabular}}
\end{table}

\clearpage
\subsection{Qualitative Results on OBER}
\label{app:qual-ober}
\begin{figure}[H]
\centering
\includegraphics[width=\textwidth,height=0.87\textheight,keepaspectratio]{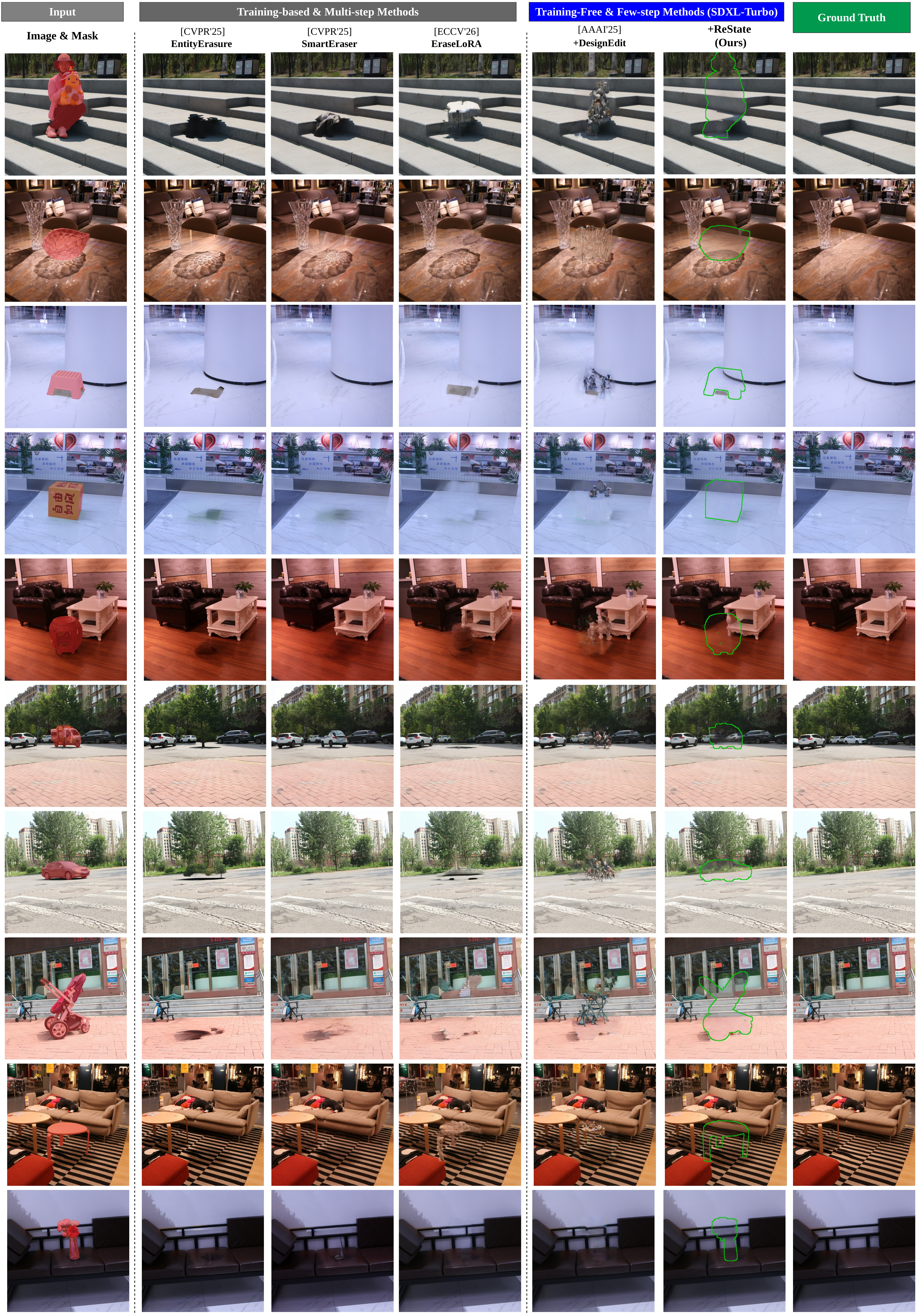}
\vspace{-8pt}
\caption{\textbf{Qualitative results on OBER-Test.}}
\label{fig:qual-ober}
\end{figure}

\clearpage
\subsection{Qualitative Results on RORD}
\label{app:qual-rord}
\begin{figure}[H]
\centering
\includegraphics[width=\textwidth,height=0.87\textheight,keepaspectratio]{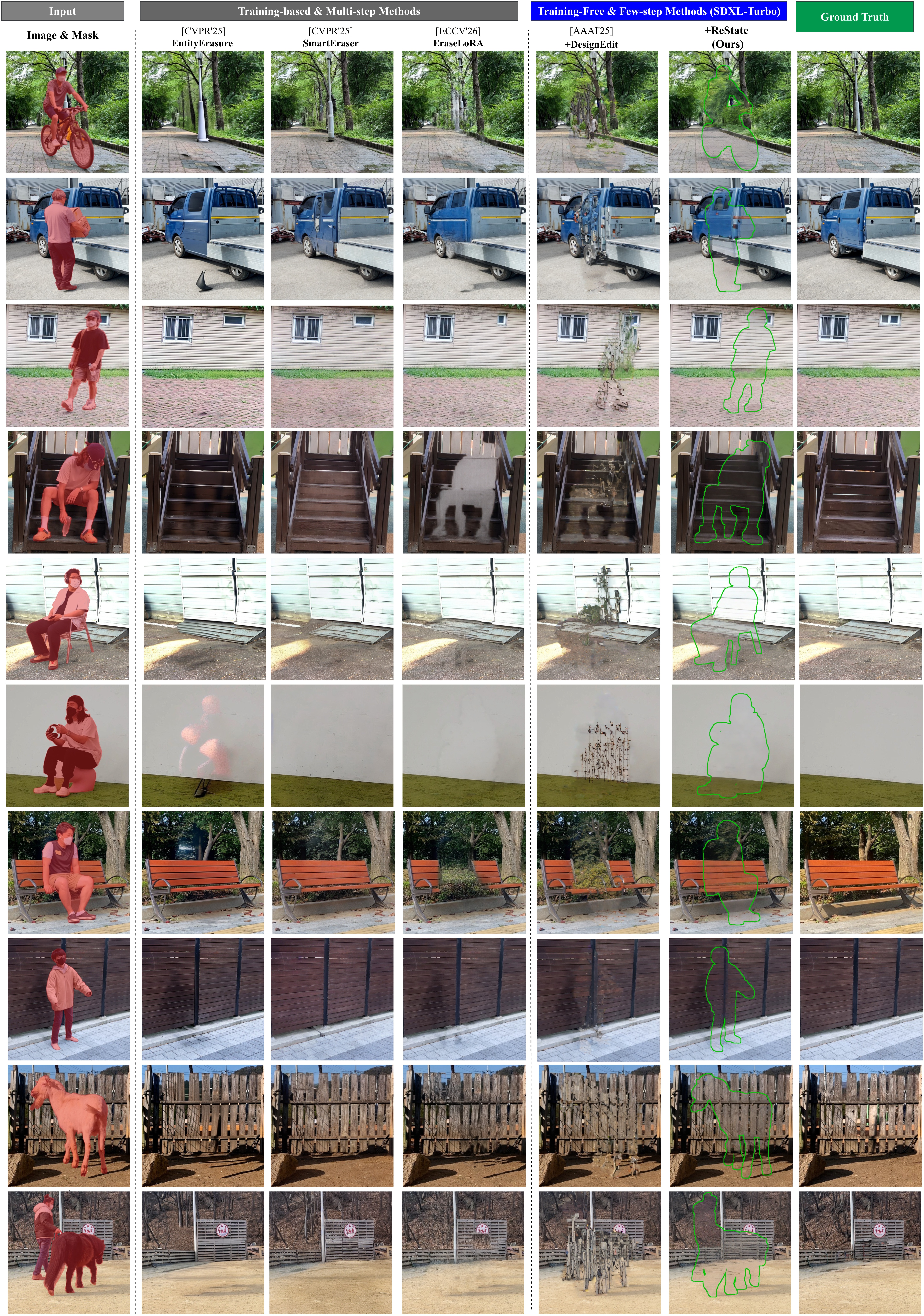}
\vspace{-8pt}
\caption{\textbf{Qualitative results on RORD.}}
\label{fig:qual-rord}
\end{figure}

\end{document}